\newtheorem{theorem}{Theorem}
\newtheorem{Lemma}{Lemma}
\newtheorem{remark}[theorem]{Remark}
\newtheorem{corollary}[Lemma]{Corollary}
\newenvironment{proof}[1][Proof]{\begin{trivlist}
\item[\hskip \labelsep {\bfseries #1}]}{$\blacksquare$\end{trivlist}}
\newcommand*\x[0]{\textbf{x}}
\newcommand*\at[2]{\left.#1\right|_{#2}}
\newcommand*\del[0]{\partial}
\renewcommand*\Re[0]{\mathbb{R}}
\renewcommand*\div[0]{\nabla \cdot}
\newcommand*\ddt[0]{\frac{d}{d t}}
\newcommand*\dds[0]{\frac{d}{d s}}
\newcommand*\tr[0]{\text{tr}}
\newcommand*\KL[2]{\mathcal{KL}\left(#1\|#2\right)}
\newcommand*\lin[1]{\langle #1\rangle}
\newcommand*\E[1]{\mathbb{E}\left[#1\right]}
\newcommand*\D[0]{\mathcal{D}}
\newcommand*\G[0]{\mathcal{G}}
\newcommand*\Pspace[0]{\mathscr{P}}
\newcommand*\Rd[0]{\mathbb{R}^d}
\newcommand\numberthis{\addtocounter{equation}{1}\tag{\theequation}}
\newcommand*\cvec[2]{\begin{bmatrix} #1\\#2\end{bmatrix}}
\newcommand*\lrbb[1]{\left\{#1\right\}}
\newcommand*\lrp[1]{\left(#1\right)}
\newcommand*\lrn[1]{\left\|#1\right\|}
\newcommand*\p[0]{\mathbf{p}}
\renewcommand*\P[0]{\mathbf{P}}
\newcommand*\q[0]{\mathbf{q}}
\newcommand*\Q[0]{\mathbf{Q}}
\newcommand*\g[0]{\mathbf{g}}
\renewcommand*\G[0]{\mathbf{G}}
\renewcommand*\r[0]{\mathbf{r}}
\newcommand*\Ep[2]{\mathbb{E}_{#1}\left[#2\right]}
\newcommand*\Lp[1]{{L^2(#1)}}
\newcommand*\ppi[0]{{\boldsymbol{\pi}}}
\newcommand*\pmu[0]{{\boldsymbol{\mu}}}
\newcommand*\pnu[0]{{\boldsymbol{\nu}}}
\begin{document}

\pagestyle{plain}
\setcounter{page}{1}
\pagenumbering{arabic}
\title{Convergence of Langevin MCMC in KL-divergence}
\author{Xiang Cheng \and Peter Bartlett}
\maketitle

\begin{abstract}
Langevin diffusion is a commonly used tool for sampling from a given distribution. In this work, we establish that when the target density $\p^*$ is such that $\log \p^*$ is $L$ smooth and $m$ strongly convex, discrete Langevin diffusion produces a distribution $\p$ with $\KL{\p}{\p^*}\leq \epsilon$ in $\tilde{O}(\frac{d}{\epsilon})$ steps, where $d$ is the dimension of the sample space.  We also study the convergence rate when the strong-convexity assumption is absent. By considering the Langevin diffusion as a gradient flow in the space of probability distributions, we obtain an elegant analysis that applies to the stronger property of convergence in KL-divergence and gives a conceptually simpler proof of the best-known convergence results in weaker metrics.
\end{abstract}

\section{Introduction}
Suppose that we would like to sample from a density 
$$\p^*(x) = e^{-U(x) + C}$$
where $C$ is the normalizing constant. We know $U(x)$, but we do not know the normalizing constant. This comes up, for example, in variational inference, when the normalization constant is computationally intractable. 

One way to sample from $\p^*$ is to consider the Langevin diffusion:
\begin{align*}
\bar{x}_0 &\sim \bar{\p}_0\\
\numberthis \label{e:exactlangevin0}
d \bar{x}_t &=  -\nabla U(\bar{x}_t) dt + \sqrt{2} dB_t
\end{align*}
Where $\bar{\p}_0$ is some initial distribution and $B_t$ is Brownian motion (see Section \ref{s:definitions}). The stationary distribution of the above SDE is $\p^*$. 

The Langevin MCMC algorithm, given in two equivalent forms in \eqref{e:langevinalgorithm} and \eqref{e:discretelangevin}, is an algorithm based on discretizing \eqref{e:exactlangevin0}.

Previous works have shown the convergence of \eqref{e:discretelangevin} in both total variation distance (\cite{theoretical}, \cite{nonasymptotic}) and 2-Wasserstein distance (\cite{highdim}). The approach in these papers relies on first showing the convergence of \eqref{e:exactlangevin0}, and then bounding the discretization error between \eqref{e:discretelangevin} and \eqref{e:exactlangevin}.

In this paper, our main goal is to establish the convergence of $\p_t$ in \eqref{e:discretelangevin} in $\KL{\p_t}{\p^*}$.  KL-divergence is perhaps the most natural notion of distance between probability distributions in this context, because of its close relationship to maximum likelihood estimation, its interpretation as information gain in Bayesian statistics, and its central role in information theory. Convergence in KL-divergence implies convergence in total variation and 2-Wasserstein distance, thus we are able to obtain convergence rates in total variation and 2-Wasserstein that are comparable to the results shown in (\cite{theoretical}, \cite{nonasymptotic}, \cite{highdim}).

\section{Related Work}

The first non-asymptotic analysis of the discrete Langevin diffusion \eqref{e:discretelangevin} was due to Dalalyan in \cite{theoretical}. This was soon followed by the work by Durmus and Moulines in \cite{nonasymptotic}, which improved upon the results in \cite{theoretical}. Subsequently, Durmus and Moulines also established convergence of \eqref{e:discretelangevin} for the 2-Wasserstein distance in \cite{highdim}. We remark that the proofs of Lemma \ref{l:metricderivativefordiscretizationissmall}, \ref{l:strongconvexityboundexsq} and \ref{l:nonstronglyconvexWassersteindistanceiscontractive}
are essentially taken from \cite{highdim}.

In a slightly different direction from the goals of this paper, Bubeck et al \cite{bubeck} and Durmus et al \cite{moreau} studied variants of \eqref{e:discretelangevin} which work when $-\log \p^*$ is not smooth. This is important, for example, when we want to sample from the uniform distribution over some convex set, so $-\log \p^*$ is the indicator function.

Very recently, Dalalyan et al \cite{dalalyanfriendly} proved the convergence of Langevin Monte Carlo when only stochastic gradients are available.

Our work also borrows heavily from the theory established in the book of Ambrosio, Gigli and Savare \cite{Ambrosio}, which studies the underlying probability distribution $\bar{\p}_t$ induced by \eqref{e:exactlangevin0} as a gradient flow in probability space. This allows us to view \eqref{e:discretelangevin} as a \textit{deterministic} convex optimization procedure over the probability space, with KL-divergence as the objective. This beautiful line of work relating SDEs with gradient flows in probability space was begun by Jordan, Kinderlehrer and Otto \cite{jko}. We refer any interested reader to an excellent survey by Santambrogio in \cite{survey}.

Finally, we remark that the theory in \cite{Ambrosio} has some very interesting connections with the study of normalization flows in \cite{normalizingflow} and \cite{stein}. For example, the tangent velocity of \eqref{e:exactlangevin}, given by $v_t = \nabla \log \p^* - \nabla \log \p_t$, can be thought of as a deterministic transformation that induces a normalizing flow.

\section{Our Contribution}

In this section, we compare the results we obtain with those in \cite{theoretical}, \cite{nonasymptotic} and \cite{highdim}.

Our main contribution is establishing the first nonasymptotic convergence  Kullback-Leibler divergence for \eqref{e:discretelangevin} when $U(x)$ is $m$ strongly convex and $L$ smooth. (see Theorem \ref{t:strongconvexityconvergence}). As a consequence, we also unify the proof of convergence in total variation and $W_2$ as simple corollaries to the convergence in $KL$.

The following table compares the number of iterations of \eqref{e:langevinalgorithm} required to achieve $\epsilon$ error in each of the three quantities according to the analysis of various papers.

\begin{table}[h]
  \caption{Comparison of iteration complexity}

\begin{center}
\begin{tabu} to 0.4\textwidth { | X[l] | X[r] | X[r] | X[r] | }
 \hline
   & $TV$ & $W_2$ & $KL$\\
 \hline
 \cite{theoretical}, \cite{nonasymptotic}  & $\tilde{O}(\frac{d}{\epsilon^2})$  & -  & -\\
 \hline
 \cite{highdim}  & $\tilde{O}(\frac{d}{\epsilon^2})$  & $\tilde{O}(\frac{d}{\epsilon^2})$  & - \\
 \hline
 this paper  & $\tilde{O}(\frac{d}{\epsilon^2})$  & $\tilde{O}(\frac{d}{\epsilon^2})$  & $\tilde{O}(\frac{d}{\epsilon})$\\
 \hline
\end{tabu}
\end{center}
\end{table}

In Section \ref{s:nonsc}, we also state a convergence result for when $U$ is not strongly convex. The corollary for convergence in total variation has a better dependence on the dimension than the corresponding result in \cite{theoretical}, but a worse dependence on $\epsilon$. 

\section{Definitions}
\label{s:definitions}
We denote by $\Pspace(\Re^d)$ the space of all probability distributions over $\Rd$. In the rest of this paper, only distributions with densities wrt the Lebesgue measure will appear (see Lemma \ref{l:regularityofdensitiesandcurves}), both in the algorithm and in the analysis. With abuse of notation, we use the same symbol (e.g. $\p$) to denote both the probability distribution and its density wrt the Lebesgue measure. 

We let $B_t$ be the d-dimensional Brownian motion. 

Let $\p^*$ be the target distribution such that $U(x) = -\log \p^*(x) + C$ has $L$ Lipschitz continuous gradients and $m$ strong convexity, i.e. for all $x$:
$$mI\preceq \nabla^2U(x) \preceq LI$$
For a given initial distribution $\bar{\p}_0$, the \textbf{Exact Langevin Diffusion} is given by the following stochastic differential equation (recall $U(x) - \log \p^*(x)$):
\begin{align*}
\bar{x}_0 &\sim \bar{\p}_0\\
\numberthis \label{e:exactlangevin}
d \bar{x}_t &=  -\nabla U(\bar{x}_t) dt + \sqrt{2} dB_t
\end{align*}
(This is identical to \eqref{e:exactlangevin0}, restated here for ease of reference.)
For a given initial distribution $\p_0$, and for a given stepsize $h$, the \textbf{Langevin MCMC Algorithm} is given by the following:
\begin{align*}
u^0&\sim \p_0\\
u^{i+1} &= u^i - h \cdot \nabla U(u^i) + \sqrt{2h} \xi^i
\numberthis \label{e:langevinalgorithm}
\end{align*}
Where $\xi^i \overset{iid}{\sim} N(0,1)$.

For a given initial distribution $\p_0$ and stepsize $h$, the \textbf{Discretized Langevin Diffusion} is given by the following SDE:
\begin{align*}
\numberthis \label{e:discretelangevin}
x_0 &\sim \p_0\\
d x_t &= -\nabla U(x_{\tau(t)})dt + \sqrt{2} d B_t\\
\text{Let } \p_t & \text{ denote the distribution of } x_t
\end{align*}
Where $\tau(t) \triangleq \lfloor \frac{t}{h}\rfloor \cdot h$ (note that $\tau(t)$ is parametrized by $h$). It is easily verified that for any $i$, $x_{ih}$ from \eqref{e:discretelangevin} is equivalent to $u^i$ in \eqref{e:langevinalgorithm}. Note that the difference between \eqref{e:exactlangevin} and \eqref{e:discretelangevin} is in the drift term: one is $\nabla U(\bar{x}_t)$, the other is $\nabla U (x_{\tau(t)})$

\textbf{For the rest of this paper, we will use $\p_t$ to exclusively denote the distribution of $x_t$ in \eqref{e:discretelangevin}.}

\textbf{We assume  without loss of generality that 
$$\arg\min_x U(x) = 0$$, and that 
$$U(0)=0$$.} (We can always shift the space to achieve this, and the minimizer of $U$ is easy to find using, say, gradient descent.)

For the rest of this paper, we will let 
$$F(\pmu) = 
\begin{cases}
\int \pmu(x) \log\left(\frac{\pmu(x)}{\p^*(x)}\right) dx, & \text{if $\pmu$ has density wrt}\\
&\text{Lebesgue measure} \\
\infty & \text{else}
    \end{cases}
$$
be the KL-divergence between $\pmu$ and $\p^*$. It is well known that $F$ is minimized by $\p^*$, and $F(\p^*)=0$.

Finally, given a vector field $v: \Re^d \to \Re^d$ and a distribution $\pmu\in \Pspace(\Re^d)$, we define the $\Lp{\pmu}$-norm of $v$ as 
$$\|v\|_{\Lp{\pmu}}\triangleq \sqrt{\mathbb{E}_{\pmu}[\|v(x)\|_2^2]}$$

\subsection{Background on Wasserstein distance and curves in $\Pspace(\Re^d)$}
\label{s:background}
Given two distributions $\pmu,\pnu\in \Pspace(\Re^d)$, let $\Gamma(\pmu,\pnu)$ be the set of all joint distributions over the product space $\Rd \times \Rd$ whose marginals equal $\pmu$ and $\pnu$ respectively. ($\Gamma$ is the set of all couplings)

The \textbf{Wasserstein distance} is defined as
$$W_2(\pmu,\pnu) = \sqrt{\inf_{\gamma\in \Gamma(\pmu,\pnu)} \int (\|x-y\|_2^2) d\gamma(x,y)}$$

Let $(X_1, \mathscr{B}(X_1))$ and $(X_2, \mathscr{B}(X_2))$ be two measurable spaces, $\mathbf{\pmu}$ be a measure, and $r: X_1 \to X_2$ be a measurable map. The \textbf{push-forward measure} of $\mathbf{\pmu}$ through $r$ is defined as $$r_\#\mathbf{\pmu}(B) = \mathbf{\pmu}(r^{-1} (B)) \quad \forall B\in \mathscr{B}(X_2)$$
Intuitively, for any $f$, $\mathbb{E}_{r_\# \pmu}[f(x)] = \mathbb{E}_{\pmu}[f(r(x))]$.

It is a well known result that for any two distributions $\pmu$ and $\pnu$ which have density wrt the Lebesgue measure, the optimal coupling is induced by a map $T_{opt} : \Rd \to \Rd$, i.e. $W_2^2(\pmu,\pnu) = \int (\|x-y\|_2^2) d\gamma^*(x,y)$ for 
$$\gamma^* = (Id, T_{opt})_\# \pmu$$
Where $Id$ is the identity map, and $T_{opt}$ satisfies $T_{opt\#}\pmu=\pnu$, so by definition, $\gamma^*\in \Gamma(\pmu,\pnu)$. We call $T_{opt}$ \textbf{the optimal transport map}, and $T_{opt}-Id$ the \textbf{optimal displacement map}.

Given two points $\pnu$ and $\ppi$ in $\Pspace(\Re^d)$, a curve $\pmu_t:[0,1]\to\Pspace(\Re^d)$ is a \textbf{constant-speed-geodesic} between $\pnu$ and $\ppi$ if $\pmu_0 = \pnu$, $\pmu_1=\ppi$ and $W_2(\pmu_s,\pmu_t) = (t-s) W_2(\pnu,\ppi)$ for all $0\leq s\leq t\leq 1$. If $v_\pnu^{\ppi}$ is the optimal displacement map between $\pnu$ and $\ppi$, then the constant-speed-geodesic $\pmu_t$ is nicely characterized by
\begin{equation}\label{e:constantspeedgeodesiccharacterization}
\pmu_t = (Id + tv_\pnu^{\ppi})_\# \pnu
\end{equation}

Given a curve $\pmu_t: \Re^+ \to \Pspace(\Re^d)$, we define its \textbf{metric derivative} as 
\begin{equation}
\label{d:metricderivative}
|\pmu_t^\prime|\triangleq \lim\sup_{s\to t} \frac{W_2(\pmu_s, \pmu_t)}{|s-t|}
\end{equation}. Intuitively, this is the speed of the curve in 2-Wasserstein distance. We say that a curve $\pmu_t$ is \textbf{absolutely continuous} if $\int_a^b|\pmu_t'|^2< \infty$ for all $a,b\in \Re$. 

Given a curve $\pmu_t: \Re^+ \to \Pspace(\Re^d)$ and a sequence of velocity fields $v_t: \Re^+ \to (\Re^d \to \Re^d)$, we say that $\pmu_t$ and $v_t$ satisfy the \textbf{continuity equation} at $t$ if
\begin{equation}\label{e:continuityequation}
\ddt \pmu_t(x) + \div(\pmu_t(x)\cdot v_t(x)) = 0
\end{equation}
(We assume that $\pmu_t$ has density wrt Lebesgue measure for all $t$)
\begin{remark}
If $\pmu_t$ is a constant-speed-geodesic between $\pnu$ and $\ppi$, then $v_{\pnu}^{\ppi}$ satisfies \eqref{e:continuityequation} at $t=0$, by the characterization in \eqref{e:constantspeedgeodesiccharacterization}.
\end{remark}

We say that $v_t$ is tangent to $\pmu_t$ at $t$ if the continuity equation holds and $\|v_t + w\|_{\Lp{\pmu_t}}\leq \|v_t\|_{\Lp{\pmu_t}}$ for all $w$ such that $\div(\pmu_t\cdot w)=0$. Intuitively, $v_t$ is tangent to $\pmu_t$ if it minimizes $\|v_t\|_{\Lp{\pmu_t}}$ among all velocity fields $v$ that satisfy the continuity equation.

\section{Preliminary Lemmas}
This section presents some basic results needed for our main theorem.
\subsection{Calculus over $\Pspace(\Re^d)$}
\label{s:pspacegradientflow}
In this section, we present some crucial Lemmas which allow us to study the evolution of $F(\pmu_t)$ along a curve $\pmu_t: \Re^+ \to \Pspace(\Re^d)$. These results are all immediate consequences of results proven in \cite{Ambrosio}.

\begin{Lemma}\label{l:firstvariationislinear}
For any $\pmu\in \Pspace(\Re^d)$, let $\frac{\delta F}{\delta \pmu}(\pmu): \Re^d \to \Re$ be the first variation of $F$ at $\pmu$ defined as $\lrp{\frac{\delta F}{\delta \pmu}(\pmu)} (x) \triangleq \log\lrp{\frac{\pmu(x)}{\p^*(x)}} + 1$. Let the subdifferential of $F$ at $\pmu$ be given by 
$$w_{\pmu} \triangleq \nabla \lrp{\frac{\delta F}{\delta \pmu}(\pmu)} : \Re^d \to \Re^d$$. 
For any curve $\pmu_t : \Re^+ \to \Pspace(\Re^d)$, and for any $v_t$ that satisfies the continuity equation for $\pmu_t$ (see equation \eqref{e:continuityequation}), the following holds: 
$$\ddt F(\pmu_t) = \mathbb{E}_{\pmu_t} \left[\lin{w_{\pmu_t}(x), v_t(x)}\right]$$
\end{Lemma}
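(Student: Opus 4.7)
The plan is to directly compute $\ddt F(\pmu_t)$ by differentiating under the integral sign, substituting the continuity equation for $\del_t \pmu_t$, and integrating by parts to expose the inner product with $w_{\pmu_t}$.

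First, I would split the KL-divergence as
\begin{equation*}
F(\pmu_t) = \int \pmu_t(x) \log \pmu_t(x)\, \d x - \int \pmu_t(x) \log \p^*(x)\, \d x .
\end{equation*}
Differentiating each piece formally in $t$, the first term contributes $\int (\del_t \pmu_t)(\log \pmu_t + 1)\,\d x$ and the second contributes $-\int (\del_t \pmu_t)\log \p^* \,\d x$. Adding them gives
\begin{equation*}
\ddt F(\pmu_t) = \int (\del_t \pmu_t)\lrp{\log \tfrac{\pmu_t}{\p^*} + 1}\,\d x = \int (\del_t \pmu_t) \cdot \lrp{\tfrac{\delta F}{\delta \pmu}(\pmu_t)}\,\d x ,
\end{equation*}
where the extra constant ``$+1$'' is harmless because $\int \del_t \pmu_t \,\d x = \ddt \int \pmu_t\,\d x = 0$; retaining it is what matches the stated first-variation formula.

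Next I would invoke the continuity equation \eqref{e:continuityequation} to replace $\del_t \pmu_t$ by $-\div(\pmu_t v_t)$, and integrate by parts to move the divergence onto $\frac{\delta F}{\delta \pmu}(\pmu_t)$. Assuming the boundary terms vanish (this is the standard regularity assumption, which follows from the results in \cite{Ambrosio} that ensure sufficient decay of $\pmu_t$ and its logarithm for the absolutely continuous curves we consider), we obtain
\begin{equation*}
\ddt F(\pmu_t) = \int \pmu_t(x) \lin{\nabla\lrp{\tfrac{\delta F}{\delta \pmu}(\pmu_t)}(x), v_t(x)}\,\d x = \Ep{\pmu_t}{\lin{w_{\pmu_t}(x), v_t(x)}},
\end{equation*}
which is the claim.

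The conceptually easy steps are the algebraic identification of the first variation and the integration by parts; the genuine obstacle is justifying the interchanges of differentiation and integration, and in particular ruling out boundary contributions at infinity when $\log \pmu_t$ and $\log \p^*$ may grow. In our setting $\log \p^* = -U + C$ with $U$ smooth and strongly convex, and $\pmu_t$ will be shown (via Lemma \ref{l:regularityofdensitiesandcurves}) to have Gaussian-type tails so that $\pmu_t |v_t| |w_{\pmu_t}|$ decays rapidly; this is precisely the regularity framework developed in \cite{Ambrosio}, to which I would appeal rather than re-prove. Thus the Lemma is essentially an application of the standard first-variation/chain-rule calculus for the entropy functional on $(\Pspace(\Rd), W_2)$ specialized to $F = \KL{\cdot}{\p^*}$.
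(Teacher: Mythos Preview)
Your argument is correct: the formal differentiation of $F(\pmu_t)$, the substitution of the continuity equation, and the integration by parts are exactly the standard chain-rule computation for the relative entropy along an absolutely continuous curve, and you correctly flag that the only nontrivial content is the justification of the interchanges and the vanishing of boundary terms, for which you (appropriately) defer to \cite{Ambrosio} and to the regularity established in Lemma~\ref{l:regularityofdensitiesandcurves}.

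The paper's own proof takes a slightly different route in presentation: rather than carrying out the computation, it simply invokes the ready-made results in \cite{Ambrosio} (specifically Theorem~10.4.9 and the chain-rule expression~10.1.16 there), instantiating them with $F(\rho)=\rho\log\rho$, $\sigma=\pmu/\p^*$, and $w_{\pmu}=\nabla\log(\pmu/\p^*)$. In other words, the paper treats the lemma as a direct specialization of the AGS subdifferential calculus, while you re-derive the formal identity and then appeal to AGS only for the analytic justification. Substantively the two approaches coincide; yours has the advantage of making the mechanism transparent, the paper's has the advantage of inheriting the full rigor of the AGS framework without having to re-argue which specific estimates kill the boundary terms.
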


Based on Lemma \ref{l:firstvariationislinear}, we define (for any $\pmu\in \Pspace(\Re^d)$) the operator
\begin{equation}
\label{d:derivativeinv}
\D_{\pmu}(v) \triangleq \mathbb{E}_{\pmu} \left[\lin{w_{\pmu}(x), v(x)}\right] : (\Rd \to \Rd
)\to \Re
\end{equation}
\textbf{$\D_{\pmu}(v)$ is linear in $v$}.

\begin{Lemma}\label{l:metricderivativeandvnorm}
Let $\pmu_t$ be an absolutely continuous curve in $\Pspace(\Re^d)$ with tangent velocity field $v_t$. Let $|\pmu_t^\prime|$ be the metric derivative of $\pmu_t$.

Then $$\|v_t\|_{\Lp{\pmu_t}}=  |\pmu_t^\prime|$$
\end{Lemma}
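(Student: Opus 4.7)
The plan is to prove the equality by establishing the two inequalities $|\pmu_t'| \leq \|v_t\|_{L^2(\pmu_t)}$ and $\|v_t\|_{L^2(\pmu_t)} \leq |\pmu_t'|$ separately, drawing on the standard machinery from \cite{Ambrosio}.

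For the upper bound on the metric derivative, I would construct an explicit coupling between $\pmu_s$ and $\pmu_t$ from the flow of $v_t$. Specifically, let $T_{s,t}$ be the flow map solving $\partial_t T_{s,t}(x) = v_t(T_{s,t}(x))$ with $T_{s,s} = Id$. Assuming enough regularity (which will be part of the technical work), the continuity equation \eqref{e:continuityequation} implies that $(T_{s,t})_\# \pmu_s = \pmu_t$. Then $(Id, T_{s,t})_\# \pmu_s$ is an admissible coupling in $\Gamma(\pmu_s,\pmu_t)$, so
$$W_2^2(\pmu_s,\pmu_t) \leq \int \|T_{s,t}(x)-x\|^2 \, d\pmu_s(x) = \int \left\|\int_s^t v_r(T_{s,r}(x))\,dr\right\|^2 d\pmu_s(x).$$
Applying Cauchy--Schwarz in the time integral and using $(T_{s,r})_\#\pmu_s = \pmu_r$ to pass to an integral against $\pmu_r$, one obtains $W_2^2(\pmu_s,\pmu_t) \leq (t-s)\int_s^t \|v_r\|^2_{L^2(\pmu_r)}\,dr$. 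Dividing by $(t-s)^2$ and letting $s \to t$ yields $|\pmu_t'|^2 \leq \|v_t\|^2_{L^2(\pmu_t)}$.

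For the reverse inequality, I would exploit the tangent (minimum-norm) property of $v_t$. Since $\pmu_t$ is absolutely continuous, for $s$ near $t$ there exists an optimal transport map $T_{s,t}^{opt}$ pushing $\pmu_s$ to $\pmu_t$ with $\|T_{s,t}^{opt}-Id\|_{L^2(\pmu_s)} = W_2(\pmu_s,\pmu_t)$. Define an approximate velocity $\tilde v_{s,t} \triangleq (T_{s,t}^{opt}-Id)/(t-s)$, which by construction satisfies $\|\tilde v_{s,t}\|_{L^2(\pmu_s)} = W_2(\pmu_s,\pmu_t)/|t-s|$. Passing to a subsequential limit as $s \to t$, one obtains a velocity field $\tilde v_t$ that (i) satisfies the continuity equation at $t$, and (ii) has $\|\tilde v_t\|_{L^2(\pmu_t)} \leq |\pmu_t'|$. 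Because $v_t$ is tangent, the defining minimality property $\|v_t\|_{L^2(\pmu_t)} \leq \|\tilde v_t\|_{L^2(\pmu_t)}$ combines with (ii) to give $\|v_t\|_{L^2(\pmu_t)} \leq |\pmu_t'|$.

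The main obstacle is the second direction: rigorously extracting the limit velocity field $\tilde v_t$ and verifying that it satisfies the continuity equation requires a careful weak-compactness argument on the space of measure-valued velocities, and it is here that one really needs the infrastructure developed in \cite{Ambrosio}. The first direction is conceptually straightforward but also requires care in justifying the flow map $T_{s,t}$, typically by an approximation or mollification argument since $v_t$ need not be Lipschitz. Given that the paper presents this lemma as an immediate consequence of results in \cite{Ambrosio}, I expect the final write-up to invoke the relevant theorems there rather than reproduce the compactness argument in full.
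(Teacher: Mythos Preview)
Your proposal is correct. The paper's own proof is a single-line citation to Theorem~8.3.1 of \cite{Ambrosio}, exactly as you anticipated in your closing remark; your two-inequality sketch is a faithful outline of the argument underlying that theorem, so there is no substantive difference in approach.
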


\begin{Lemma}\label{l:firstvariationdualnorm}
For any $\pmu \in \Pspace(\Re^d)$, let $\|\D_{\pmu}\|_*\triangleq \sup_{\|v\|_{\Lp{\pmu}}\leq 1}\D_{\pmu}(v)$, then 
$$\|\D_{\pmu}\|_* = \sqrt{\int \lrn{\nabla \lrp{\frac{\delta F}{\delta \pmu} (\pmu)}(x)}_2^2 \pmu(x) dx}$$

Furthermore, for any absolutely continuous curve $\pmu_t : \Re^+ \to \Pspace(\Re^d)$ with tangent velocity $v_t$, we have
$$\left|\ddt F(\pmu_t)\right| \leq \|\D_{\pmu_t}\|_* \|v_t\|_{\Lp{\pmu_t}}$$
\end{Lemma}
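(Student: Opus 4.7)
The plan is to treat $\D_{\pmu}(v)$ as an $L^2(\pmu)$ inner product and apply Cauchy-Schwarz. By the definition in \eqref{d:derivativeinv},
\[
\D_{\pmu}(v) = \int \langle w_{\pmu}(x), v(x)\rangle\, \pmu(x)\, dx = \langle w_{\pmu}, v\rangle_{\Lp{\pmu}},
\]
where $w_{\pmu} = \nabla\lrp{\frac{\delta F}{\delta \pmu}(\pmu)}$. So $\D_{\pmu}$ is a bounded linear functional on $\Lp{\pmu}$ represented (in the Riesz sense) by $w_{\pmu}$.

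For the first claim, I would apply Cauchy-Schwarz to obtain $|\D_{\pmu}(v)| \leq \|w_{\pmu}\|_{\Lp{\pmu}} \|v\|_{\Lp{\pmu}}$, giving $\|\D_{\pmu}\|_* \leq \|w_{\pmu}\|_{\Lp{\pmu}}$. For the matching lower bound, I would plug in the specific choice $v = w_{\pmu}/\|w_{\pmu}\|_{\Lp{\pmu}}$ (assuming this norm is nonzero; the degenerate case is trivial since then $\D_{\pmu}\equiv 0$), which has unit $\Lp{\pmu}$-norm and yields $\D_{\pmu}(v) = \|w_{\pmu}\|_{\Lp{\pmu}}$. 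Since the supremum over $\|v\|_{\Lp{\pmu}}\le 1$ equals the supremum of the absolute value (replace $v$ by $-v$), this establishes equality.

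For the second claim, I would simply combine Lemma \ref{l:firstvariationislinear} with the dual-norm bound. Since $v_t$ is the tangent velocity, it satisfies the continuity equation, so Lemma \ref{l:firstvariationislinear} gives $\ddt F(\pmu_t) = \D_{\pmu_t}(v_t)$. By the first claim together with Cauchy-Schwarz,
\[
\left|\ddt F(\pmu_t)\right| = |\D_{\pmu_t}(v_t)| \leq \|\D_{\pmu_t}\|_* \|v_t\|_{\Lp{\pmu_t}}.
\]

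I do not anticipate a real obstacle here: the only subtleties are (i) confirming that $w_{\pmu}$ itself is an admissible test vector field in the supremum defining $\|\D_{\pmu}\|_*$ (which holds once $w_{\pmu}\in \Lp{\pmu}$, otherwise both sides are infinite), and (ii) noting that the $\sup$ without absolute value equals the operator-norm supremum by sign-flipping $v$. Everything else is direct algebra with the Cauchy-Schwarz inequality in $\Lp{\pmu}$.
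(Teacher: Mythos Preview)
Your proposal is correct and matches the paper's own proof, which is just ``by definition of $\D_{\pmu_t}(v)$ in \eqref{d:derivativeinv}, Lemma \ref{l:firstvariationislinear}, and Cauchy--Schwarz.'' You have simply spelled out the Cauchy--Schwarz argument (upper bound plus the attaining choice $v=w_{\pmu}/\|w_{\pmu}\|_{\Lp{\pmu}}$) and the appeal to Lemma \ref{l:firstvariationislinear} in more detail than the paper does.
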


As a Corollary of Lemma \ref{l:metricderivativeandvnorm} and Lemma \ref{l:firstvariationdualnorm}, we have the following result:
\begin{corollary}\label{c:upperboundratebymetricderivative}
Let $\pmu_t$ be an absolutely continuous curve with tangent velocity field $v_t$. Then $$\ddt F(\pmu_t) \leq \|\D_{\pmu_t}\|_* \cdot |\pmu_t^\prime|$$
\end{corollary}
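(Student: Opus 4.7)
The plan is to obtain the corollary by directly chaining Lemma \ref{l:firstvariationdualnorm} and Lemma \ref{l:metricderivativeandvnorm}, with essentially no additional work. The hypothesis already gives us an absolutely continuous curve $\pmu_t$ with tangent velocity $v_t$, which is exactly the setting of both lemmas.

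First, I would invoke Lemma \ref{l:firstvariationdualnorm}, which provides the inequality
\[
\left|\ddt F(\pmu_t)\right| \leq \|\D_{\pmu_t}\|_* \, \|v_t\|_{\Lp{\pmu_t}}.
\]
Next, I would apply Lemma \ref{l:metricderivativeandvnorm} to replace the $\Lp{\pmu_t}$-norm of the tangent velocity by the metric derivative, using $\|v_t\|_{\Lp{\pmu_t}} = |\pmu_t^\prime|$. Substituting this identity into the previous inequality yields
\[
\left|\ddt F(\pmu_t)\right| \leq \|\D_{\pmu_t}\|_* \cdot |\pmu_t^\prime|,
\]
and since $\ddt F(\pmu_t) \leq \left|\ddt F(\pmu_t)\right|$, the stated conclusion follows immediately.

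There is no real obstacle here: the only subtlety is that Lemma \ref{l:firstvariationdualnorm} controls the absolute value of the time derivative, whereas the corollary only asserts an upper bound on $\ddt F(\pmu_t)$ itself, so we simply drop the absolute value on the right-hand side at the very end. The corollary is therefore a direct packaging of the two preceding lemmas into the form most convenient for later use, where one wants to bound the rate of decrease of the KL-functional by the product of a ``slope'' $\|\D_{\pmu_t}\|_*$ and a ``speed'' $|\pmu_t^\prime|$.
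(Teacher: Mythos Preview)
Your proposal is correct and matches the paper's approach exactly: the paper presents this corollary as an immediate consequence of Lemma~\ref{l:metricderivativeandvnorm} and Lemma~\ref{l:firstvariationdualnorm} without giving a separate proof, and your chaining of the two lemmas (followed by dropping the absolute value) is precisely the intended argument.
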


\subsection{Exact and Discrete Gradient Flow for $F(\p)$}
In this section, we will study the curve $\p_t : \Re^+ \to \Pspace(\Re^d)$ defined in \eqref{e:discretelangevin}. Unless otherwise specified, we will assume that $\p_0$ is an arbitrary distribution.

Let $x_t$ be as defined in \eqref{e:discretelangevin}.

For any given $t$ and for all $s$, we define a stochastic process $y_s^t$ as 
\begin{align*}
y^t_s &= x_s & \text{for}\ s\leq t \\
d y^t_s &= -\nabla U(y_s^t) ds + \sqrt{2} dB_s & \text{for}\ s\geq t
\numberthis \label{e:ystdynamic}\\
&\text{let $\q_s^t$ denote the distribution for $y_s^t$}
\end{align*}
From $s=t$ onwards, this is the exact Langevin diffusion with $\p_t$ as the initial distribution (compare with expression \eqref{e:exactlangevin}).

Finally, for each $t$, we define a sequence $z_s^t$ by
\begin{align*}
z^t_s &= x_s & \text{for}\ s\leq t \\
\numberthis \label{e:zstdynamic}
d z^t_s &= (-\nabla U(z^t_{\tau(t)}) + \nabla U(z_s^t)) ds, & \text{for}\ s\geq t\\
&\text{let $\g_s^t$ denote the distribution for $z_s^t$}
\end{align*}
$z^t_s$ represents the discretization error of $\p_s$ through the divergence between $\q_s^t$ and $\p_s$ (formally stated in Lemma \ref{l:addingvelocity}). Note that $z^t_{\tau(t)} = x^t_{\tau(t)}$ because $\tau(t) \leq t$.

\begin{remark}
The the $B_s$ in \eqref{e:discretelangevin}, \eqref{e:ystdynamic} and \eqref{e:zstdynamic}) are the same. Thus, $x_s$ (from \eqref{e:discretelangevin}), $y_s^t$ (from \eqref{e:ystdynamic}) and $z_s^t$ (from \eqref{e:zstdynamic}) define a coupling between the the curves $\p_s$, $\q_s^t$ and $\g_s^t$.
\end{remark}

Our proof strategy is as follows: 

\begin{enumerate}
\item In Lemma \ref{l:addingvelocity}, we demonstrate that the divergence between $\p_s$ (discretized Langevin) and $\q_s^t$ (exact Langevin) can be represented as a curve $\g_s^t$. 
\item In Lemma \ref{l:exactflowisfast}, we demonstrate that the "decrease in $F(\p_t)$ due to exact Langevin" given by $\at{\dds F(\q_s^t)}{s=t}$ is sufficiently negative. 
\item In Lemma \ref{l:metricderivativefordiscretizationissmall}, we show that the "discretization error" given by $\at{\dds (F(\p_s) - F(\q_s^t))}{s=t}$ is small.
\item Added together, they imply that $\at{\dds F(\p_s)}{s=t}$ is sufficiently negative.
\end{enumerate}

\begin{Lemma}\label{l:addingvelocity}
For all $x\in \Re^d$ and $t\in\Re^+$
$$\at{\dds \g_s^t(x)}{s=t} = \at{(\dds \p_s(x) - \dds \q_s^t(x))}{s=t}$$
\end{Lemma}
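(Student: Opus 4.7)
The plan is to derive the time derivative of each of the three densities at $s=t$ from the Fokker--Planck/Liouville equation associated with its defining SDE or ODE, and then verify the identity by direct algebraic cancellation. The essential preliminary observation is that under the coupling introduced in the Remark preceding the lemma, $x_t = y_t^t = z_t^t$ almost surely and all three processes agree on $[0,t]$, so at $s=t$ the three laws coincide: $\p_t = \q_t^t = \g_t^t$, and in particular $z_{\tau(t)}^t = x_{\tau(t)}$ a.s.

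For the discretized diffusion \eqref{e:discretelangevin}, the drift $-\nabla U(x_{\tau(s)})$ is not a function of the current state, so its Fokker--Planck equation involves a conditional expectation:
\begin{align*}
\dds \p_s(x) &= \div\lrp{\p_s(x)\,\mathbb{E}\lrb{\nabla U(x_{\tau(s)}) \mid x_s = x}} + \Delta \p_s(x).
\end{align*}
The exact Langevin \eqref{e:ystdynamic} has a Markovian drift, so the standard Fokker--Planck gives
\begin{align*}
\dds \q_s^t(x) &= \div\lrp{\nabla U(x)\,\q_s^t(x)} + \Delta \q_s^t(x).
\end{align*}
Finally $z_s^t$ evolves via a pure ODE with no diffusion term, so Liouville gives
\begin{align*}
\dds \g_s^t(x) &= -\div\lrp{\g_s^t(x)\,\mathbb{E}\lrb{-\nabla U(z_{\tau(t)}^t) + \nabla U(z_s^t) \mid z_s^t = x}}.
\end{align*}

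Specialising each equation to $s=t$ and using the identifications above, the two Laplacian terms $\Delta \p_t$ and $\Delta \q_t^t$ in the first two equations are equal and cancel in the difference $\at{(\dds \p_s - \dds \q_s^t)}{s=t}$. What remains, together with the simplified form of $\at{\dds \g_s^t}{s=t}$ (using $\nabla U(z_t^t)=\nabla U(x)$ conditional on $z_t^t=x$), both collapse to $\div\lrp{\p_t(x)\,\lrp{\mathbb{E}\lrb{\nabla U(x_{\tau(t)}) \mid x_t = x} - \nabla U(x)}}$, yielding the claimed identity. The only delicate point is justifying the conditional-expectation form of the Fokker--Planck equation for the non-Markovian drift in \eqref{e:discretelangevin}, but this is standard once one observes that $x_{\tau(s)}$ is measurable with respect to the filtration generated by the driving Brownian motion up to time $s$, and the Brownian smoothing on $[\tau(t),t]$ ensures the requisite regularity of $\p_t$; I do not foresee this being a real obstacle.
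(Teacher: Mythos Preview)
Your argument is correct and reaches the same identity, but the route differs from the paper's. The paper avoids writing a Fokker--Planck equation for a non-Markovian drift by \emph{lifting} to $\Re^{2d}$: it augments the state with the frozen drift $-\nabla U(x_{\tau(t)})$, so that all three processes become Markovian in the product space, applies the ordinary Fokker--Planck equation there, verifies the identity for the joint densities, and then marginalises out the auxiliary coordinates. Your approach stays in $\Re^d$ and invokes directly the conditional-expectation form of the forward equation for a path-dependent drift. The two are really the same computation in different clothing: marginalising the paper's $\Re^{2d}$ equation over the second block is exactly what produces your conditional-expectation drift. Your version is shorter and arguably more transparent, at the price of having to justify (or cite) the non-Markovian Fokker--Planck identity; the paper's version is more self-contained but notationally heavier. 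A minor bonus of your formulation is that it handles the case $t=\tau(t)$ uniformly (both sides vanish since then $\mathbb{E}[\nabla U(x_{\tau(t)})\mid x_t=x]=\nabla U(x)$), whereas the paper treats that case separately.
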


\begin{Lemma}\label{l:exactflowisfast}
For all $s,t\in \Re^+$
$$\dds F(\q_s^t) = - \|\D_{\q_s^t}\|_*^2$$
\end{Lemma}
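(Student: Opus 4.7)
The plan is to apply Lemma \ref{l:firstvariationislinear} to the curve $\q_s^t$ with a carefully chosen velocity field, and then identify that velocity field with the subdifferential $w_{\q_s^t}$. Since $y_s^t$ evolves under the exact Langevin SDE \eqref{e:exactlangevin} for $s \geq t$, the standard Fokker-Planck equation for this diffusion gives
\begin{equation*}
\del_s \q_s^t = \div(\q_s^t \nabla U) + \Delta \q_s^t = \div\!\lrp{\q_s^t\cdot\lrp{\nabla U + \nabla \log \q_s^t}},
\end{equation*}
where I used $\Delta \q_s^t = \div(\q_s^t \nabla \log \q_s^t)$. Setting $v_s(x) \triangleq -\nabla U(x) - \nabla \log \q_s^t(x)$, this rewrites as the continuity equation \eqref{e:continuityequation} for $\q_s^t$ with velocity field $v_s$.

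Next I would identify $v_s$ with $-w_{\q_s^t}$. Because $U = -\log \p^* + C$, we have $\nabla U = -\nabla \log \p^*$, and because the first variation $\frac{\delta F}{\delta \pmu}(\q_s^t) = \log(\q_s^t/\p^*) + 1$ differs from $\log \q_s^t - \log \p^*$ only by an additive constant, the gradients agree: $w_{\q_s^t} = \nabla \log \q_s^t + \nabla U$. Hence $v_s = -w_{\q_s^t}$. Plugging this into Lemma \ref{l:firstvariationislinear} yields
\begin{equation*}
\dds F(\q_s^t) = \D_{\q_s^t}(v_s) = \mathbb{E}_{\q_s^t}\!\lrb{\lin{w_{\q_s^t}(x),\, -w_{\q_s^t}(x)}} = -\int \lrn{w_{\q_s^t}(x)}_2^2\, \q_s^t(x)\, dx.
\end{equation*}

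Finally, by the explicit formula for $\|\D_{\q_s^t}\|_*$ in Lemma \ref{l:firstvariationdualnorm}, the right-hand side equals exactly $-\|\D_{\q_s^t}\|_*^2$, which is the claim. The main (minor) obstacle is simply the derivation of the Fokker-Planck representation of $\del_s \q_s^t$ in gradient-flow form, and then recognizing the resulting velocity field as precisely the subdifferential identified in Lemma \ref{l:firstvariationislinear}; everything else is mechanical substitution. Regularity conditions needed to apply Lemma \ref{l:firstvariationislinear} (e.g.\ absolute continuity of $s \mapsto \q_s^t$, which follows from the $L$-Lipschitz drift and smoothing from the Brownian term) I would take for granted, referring to standard results in \cite{Ambrosio}.
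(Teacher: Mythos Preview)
Your proposal is correct and follows essentially the same route as the paper: both write the Fokker--Planck equation for $\q_s^t$ in continuity-equation form with velocity $v_s=\nabla\log(\p^*/\q_s^t)=-w_{\q_s^t}$, invoke Lemma~\ref{l:firstvariationislinear} to obtain $\dds F(\q_s^t)=-\Ep{\q_s^t}{\|w_{\q_s^t}\|_2^2}$, and then identify this with $-\|\D_{\q_s^t}\|_*^2$ (you via Lemma~\ref{l:firstvariationdualnorm}, the paper via Cauchy--Schwarz, which amounts to the same thing).
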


\begin{Lemma}\label{l:metricderivativefordiscretizationissmall}
For all $t\in \Re^+$
\begin{align*}
\at{\dds \left(F(\p_s) - F(\q_s^t)\right)}{s=t}\leq &\left(2L^2 h \sqrt{\Ep{\p_{\tau(t)}}{\|x\|_2^2}} + 2L \sqrt{hd}\right) \cdot \|\D_{\p_t}\|_*
\end{align*} 
\end{Lemma}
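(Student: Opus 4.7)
The strategy is to invoke Lemma \ref{l:addingvelocity} to collapse the difference of two $F$-derivatives into a single derivative along the auxiliary curve $\g_s^t$, and then to apply the dual-norm bound of Lemma \ref{l:firstvariationdualnorm}. Since $\dds F(\pmu_s)$ depends linearly on $\dds \pmu_s$ through the first variation $\delta F/\delta \pmu_s$, and since $\g_t^t = \p_t = \q_t^t$, integrating the pointwise identity of Lemma \ref{l:addingvelocity} against $(\delta F/\delta \p_t)$ yields
\begin{equation*}
\at{\dds(F(\p_s) - F(\q_s^t))}{s=t} \;=\; \at{\dds F(\g_s^t)}{s=t}.
\end{equation*}
This reduces a difference of derivatives along two coupled processes to a single derivative along the "discretization error" curve.

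Next I would identify the Eulerian velocity field driving $\g_s^t$ at $s=t$. Because $z_s^t$ has no Brownian component and evolves with Lagrangian drift $-\nabla U(z_{\tau(t)}^t) + \nabla U(z_s^t)$, the velocity field satisfying the continuity equation \eqref{e:continuityequation} at $s=t$ is the conditional expectation
\begin{equation*}
v(x) \;\triangleq\; \E{\nabla U(x_t) - \nabla U(x_{\tau(t)}) \mid x_t = x}.
\end{equation*}
Lemma \ref{l:firstvariationislinear} then gives $\at{\dds F(\g_s^t)}{s=t} = \D_{\p_t}(v)$, and Lemma \ref{l:firstvariationdualnorm} bounds this by $\|\D_{\p_t}\|_* \cdot \|v\|_{\Lp{\p_t}}$.

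It remains to control $\|v\|_{\Lp{\p_t}}$. By Jensen's inequality applied to the conditional expectation and $L$-smoothness of $U$,
\begin{equation*}
\|v\|_{\Lp{\p_t}}^2 \;\leq\; \E{\|\nabla U(x_t) - \nabla U(x_{\tau(t)})\|_2^2} \;\leq\; L^2\, \E{\|x_t - x_{\tau(t)}\|_2^2}.
\end{equation*}
From \eqref{e:discretelangevin}, $x_t - x_{\tau(t)} = -(t-\tau(t))\nabla U(x_{\tau(t)}) + \sqrt{2}(B_t - B_{\tau(t)})$; the drift is a function of $x_{\tau(t)}$ alone and is therefore independent of the Brownian increment, so the cross term vanishes. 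Using $\|\nabla U(x)\|_2 \leq L\|x\|_2$ (which follows from $L$-smoothness and $\nabla U(0)=0$) together with $t-\tau(t) \leq h$ gives $\E{\|x_t - x_{\tau(t)}\|_2^2} \leq h^2 L^2\, \Ep{\p_{\tau(t)}}{\|x\|_2^2} + 2dh$, and then $\sqrt{a+b} \leq \sqrt{a} + \sqrt{b}$ (with a little slack in the constants) delivers the claimed bound.

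The step I expect to require the most care is the identification of $v$: one must verify that the conditional-expectation formula really does yield a velocity field satisfying the continuity equation for $\g_s^t$ at $s=t$. This is the standard Eulerian-Lagrangian correspondence for flows whose drift depends on the particle's past, and it depends on having enough regularity of $\p_t$ (presumably from Lemma \ref{l:regularityofdensitiesandcurves}). Once that is in place, the remainder is essentially Cauchy--Schwarz plus a one-line second-moment calculation for the It\^o increment.
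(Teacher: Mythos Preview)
Your proposal is correct and follows the same overall strategy as the paper: reduce via Lemma~\ref{l:addingvelocity} to a single derivative $\at{\dds F(\g_s^t)}{s=t}$, bound it by $\|\D_{\p_t}\|_*$ times a quantity controlled by $\sqrt{\E{\|\nabla U(x_t)-\nabla U(x_{\tau(t)})\|_2^2}}$, and then expand $x_t-x_{\tau(t)}$ using the SDE.

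The one genuine difference is in how that middle step is executed. You identify the Eulerian velocity field $v(x)=\E{\nabla U(x_t)-\nabla U(x_{\tau(t)})\mid x_t=x}$ satisfying the continuity equation for $\g_s^t$, then apply the dual-norm inequality together with Jensen to obtain $\|v\|_{\Lp{\p_t}}^2\leq \E{\|\nabla U(x_t)-\nabla U(x_{\tau(t)})\|_2^2}$. The paper instead bounds the metric derivative $|\g_s^{t\prime}|$ directly via the coupling $(z_t^t,z_{t+\epsilon}^t)$ and then applies Corollary~\ref{c:upperboundratebymetricderivative}; since $W_2(\g_t^t,\g_{t+\epsilon}^t)\leq \sqrt{\E{\|z_t^t-z_{t+\epsilon}^t\|_2^2}}$, this yields the same bound $\sqrt{\E{\|\nabla U(x_t)-\nabla U(x_{\tau(t)})\|_2^2}}$ without ever naming the velocity field. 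The paper's route is a bit more elementary precisely because it sidesteps the point you flag as delicate (verifying the conditional-expectation formula for the drift of a non-Markov marginal); your route makes the underlying velocity explicit, which is conceptually informative but carries that extra verification. Either way the downstream computation is identical.
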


\section{Strong Convexity Result}
In this section, we study the consequence of assuming $m$ strong convexity and $L$ smoothness of $U(x)$.
\subsection{Theorem statement and discussion}

\begin{theorem}\label{t:strongconvexityconvergence}

Let $x_t$ and $\p_t$ be as defined in \eqref{e:discretelangevin} with $\p_0 = N(0,\frac{1}{m})$.

If 
$$h=\frac{m\epsilon}{16dL^2}$$
and
$$k= 16\frac{L^2}{m^2} \frac{d\log \frac{dL}{m\epsilon}}{\epsilon}$$

Then
$\KL{\p_{kh}}{\p^*}\leq \epsilon$
\end{theorem}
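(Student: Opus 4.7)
The plan is to derive a Grönwall-type differential inequality for $F(\p_t)$ whose decay rate matches the log-Sobolev constant of $\p^*$ under $m$-strong convexity, and then to integrate it with the prescribed stepsize.

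First, I would combine the three preliminary lemmas at a fixed time $t$. Lemma \ref{l:addingvelocity} splits
\[
\at{\dds F(\p_s)}{s=t} = \at{\dds F(\q_s^t)}{s=t} + \at{\dds \lrp{F(\p_s) - F(\q_s^t)}}{s=t}.
\]
Lemma \ref{l:exactflowisfast} evaluates the first summand at $s=t$ as $-\|\D_{\q_t^t}\|_*^2 = -\|\D_{\p_t}\|_*^2$ (since $\q_t^t = \p_t$), and Lemma \ref{l:metricderivativefordiscretizationissmall} bounds the second by $\lrp{2L^2 h \sqrt{\Ep{\p_{\tau(t)}}{\|x\|_2^2}} + 2L\sqrt{hd}}\|\D_{\p_t}\|_*$. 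Applying $-a^2 + ba \leq -\tfrac{1}{2}a^2 + \tfrac{1}{2}b^2$ yields
\[
\ddt F(\p_t) \leq -\tfrac{1}{2}\|\D_{\p_t}\|_*^2 + 4L^4 h^2 \Ep{\p_{\tau(t)}}{\|x\|_2^2} + 4L^2 hd.
\]

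Next, I would invoke $m$-strong convexity twice. By the Bakry--Émery log-Sobolev inequality (a standard consequence of $m$-strong log-concavity of $\p^*$, recalling from Lemma \ref{l:firstvariationdualnorm} that $\|\D_{\p_t}\|_*^2$ is exactly the Fisher information of $\p_t$ relative to $\p^*$), one has $2m F(\p_t) \leq \|\D_{\p_t}\|_*^2$. The strong-convexity moment bound (the result cited as Lemma \ref{l:strongconvexityboundexsq}) gives $\Ep{\p_{\tau(t)}}{\|x\|_2^2} \lesssim d/m$ uniformly in $t$. Substituting $h = m\epsilon/(16 d L^2)$ turns the $4L^2 hd$ term into $m\epsilon/4$ and makes the $4L^4 h^2 \cdot d/m$ term lower-order, so
\[
\ddt F(\p_t) \leq -m F(\p_t) + \tfrac{1}{2}m\epsilon.
\]

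Grönwall integration then delivers $F(\p_t) \leq e^{-mt} F(\p_0) + \epsilon/2$. For $\p_0 = N(0, \tfrac{1}{m}I)$, a direct computation using $U(0)=0$, $\nabla U(0)=0$, $L$-smoothness ($U(x)\leq L\|x\|^2/2$, so $\Ep{\p_0}{U} \leq Ld/(2m)$) and $m$-strong convexity (to upper-bound the normalizing constant $\log Z$) gives $F(\p_0) \leq Ld/(2m)$. With $k$ and $h$ as in the statement, $mkh = \log(dL/(m\epsilon))$ up to constants, hence $e^{-mkh} F(\p_0) \leq \epsilon/2$ and the conclusion $F(\p_{kh}) \leq \epsilon$ follows.

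The main obstacle is ensuring the moment bound $\Ep{\p_{\tau(t)}}{\|x\|_2^2} \lesssim d/m$ holds uniformly along the \emph{discretized} trajectory rather than only for the exact Langevin diffusion, since $\p_t$ is not stationary with respect to either dynamics. This requires a separate inductive argument tracking how $\Ep{\p_{kh}}{\|x\|_2^2}$ evolves under one Langevin step, and is exactly the role played by the moment lemma inherited from \cite{highdim}.
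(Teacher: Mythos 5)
Your proposal is correct and follows the paper's overall skeleton: the three-way decomposition of $\ddt F(\p_t)$ via Lemmas \ref{l:addingvelocity}, \ref{l:exactflowisfast} and \ref{l:metricderivativefordiscretizationissmall}, the uniform moment bound $\Ep{\p_{\tau(t)}}{\|x\|_2^2}\lesssim d/m$ from Lemma \ref{l:strongconvexityboundexsq}, the relation $\|\D_{\p_t}\|_*^2 \geq 2mF(\p_t)$ from Lemma \ref{l:upperboundbygradientdualnorm} (which, as you note, is exactly the Bakry--\'Emery log-Sobolev inequality in this language), the initialization bound $F(\p_0)\leq Ld/(2m)$, and Gr\"onwall. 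Where you depart is in how the cross term $b\|\D_{\p_t}\|_*$ is absorbed. The paper restricts to the regime $F(\p_t)-F(\p^*)\geq\epsilon$, uses Lemma \ref{l:upperboundbygradientdualnorm} to get $\|\D_{\p_t}\|_*\geq\sqrt{2m\epsilon}$, bounds $b\leq\tfrac{1}{2}\sqrt{m\epsilon}\leq\tfrac{1}{2}\|\D_{\p_t}\|_*$ so the error is dominated by half the leading term, and then handles the complementary regime with a separate continuity/trapping argument (which in turn relies on Lemmas \ref{l:regularityofdensitiesandcurves} and \ref{l:pthasboundedmetricderivative} to ensure $F(\p_t)$ is continuous so that once it drops below $\epsilon$ it stays there). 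You instead apply Young's inequality $-a^2+ba\leq-\tfrac{1}{2}a^2+\tfrac{1}{2}b^2$ uniformly, turning the error into the additive constant $\tfrac{1}{2}b^2\leq\tfrac{m\epsilon}{2}$ (under the implicit, mild standing assumption $\epsilon\lesssim d$, which the paper's constants also require), obtaining the single inequality $\ddt F(\p_t)\leq -mF(\p_t)+\tfrac{m\epsilon}{2}$ valid for all $t$. This buys you a cleaner argument: no case split, no threshold/trapping lemma, and the target accuracy emerges as the fixed point of the affine ODE rather than as a boundary the trajectory cannot re-cross. You do still need $F(\p_t)$ to be absolutely continuous in $t$ to integrate the differential inequality, so the regularity lemmas are not entirely avoided, but the structure of the endgame is simpler and the constants are essentially identical.
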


The above theorem immediately allows us to obtain the convergence rate of $\p_{kh}$ in both total variation and 2-Wasserstein distance.
\begin{corollary}\label{c:convergenceintvandw2}
Using the choice of $k$ and $h$ in Theorem \ref{t:strongconvexityconvergence}, we get
\begin{enumerate}
\item $d_{TV}(\p_{kh}, \p^*)\leq \sqrt{\epsilon}$
\item $W_2(\p_{kh}, \p^*)\leq \sqrt{\frac{2\epsilon}{m}}$
\end{enumerate}
\end{corollary}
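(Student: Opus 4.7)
The plan is to derive both bounds directly from the KL-divergence bound $F(\p_{kh}) = \KL{\p_{kh}}{\p^*} \leq \epsilon$ provided by Theorem \ref{t:strongconvexityconvergence}, by invoking two standard functional inequalities. No further analysis of the discretized Langevin dynamics is needed; the corollary is really a statement about the relationship between KL, TV, and $W_2$ when the target is strongly log-concave.

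For the first part, I would apply Pinsker's inequality, which gives $d_{TV}(\pmu,\pnu) \leq \sqrt{\tfrac{1}{2}\KL{\pmu}{\pnu}}$ for any pair of distributions. Substituting $\pmu = \p_{kh}$ and $\pnu = \p^*$ and using $\KL{\p_{kh}}{\p^*} \leq \epsilon$ yields $d_{TV}(\p_{kh},\p^*) \leq \sqrt{\epsilon/2} \leq \sqrt{\epsilon}$. This is entirely mechanical.

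For the second part, the relevant tool is Talagrand's $T_2$ transportation-cost inequality (equivalently, the Otto--Villani theorem): if $\p^*$ is $m$-strongly log-concave, i.e.\ $\nabla^2 U \succeq m I$, then for every $\pmu \in \Pspace(\Re^d)$,
\[
W_2^2(\pmu,\p^*) \leq \frac{2}{m}\KL{\pmu}{\p^*}.
\]
Since our hypothesis on $U$ in Section \ref{s:definitions} is exactly $m$-strong convexity, this inequality applies. Plugging in the KL bound from Theorem \ref{t:strongconvexityconvergence} gives $W_2(\p_{kh},\p^*) \leq \sqrt{2\epsilon/m}$, which is the claim.

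The only potential obstacle is justifying Talagrand's inequality in this setting, but it is a classical consequence of strong log-concavity (with the sharp constant $2/m$), and may simply be cited. Everything else is plug-and-chug, so the corollary follows in a few lines.
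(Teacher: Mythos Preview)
Your proposal is correct. For the total-variation bound you and the paper both invoke Pinsker's inequality, so that part is identical.

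For the $W_2$ bound the approaches differ only in packaging. You cite Talagrand's $T_2$ inequality (Otto--Villani) as an external black box. The paper instead derives the same inequality internally from its own displacement-convexity machinery: it applies the first-order strong geodesic convexity inequality \eqref{e:1storderstrongconvexity} from Lemma~\ref{l:klisstronglyconvex} with $\pmu_0=\p^*$ and $\pmu_1=\p_{kh}$, notes that $\D_{\p^*}=0$ since $\p^*$ minimizes $F$, and reads off $F(\p_{kh})-F(\p^*)\geq \tfrac{m}{2}W_2^2(\p_{kh},\p^*)$. This is, of course, one of the standard proofs of Talagrand's inequality, so the two routes are mathematically equivalent. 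The paper's version has the minor advantage of being self-contained within the Wasserstein-calculus framework already set up; yours has the advantage of making explicit which classical inequality is really at work.
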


The first item follows from Pinsker's inequality. The second item follows from \eqref{e:1storderstrongconvexity}, where we take $\pmu_0$ to be $\p^*$ and $\pmu_1$ to be $\p_{kh}$, and noting that $\D_{\p^*}=0$. To achieve $\delta$ accuracy in Total Variation or $W_2$, we apply Theorem \ref{t:strongconvexityconvergence} with $\epsilon = \delta^2$ and $\epsilon = m\delta^2$ respectively.

\begin{remark}
The $\log $ term in Theorem \ref{t:strongconvexityconvergence} is not crucial. One can run \eqref{e:langevinalgorithm} a few times, each time aiming to only halve the objective $F(\p_t) - F(\p^*)$ (thus the stepsize starts out large and is also halved each subsequent run). The proof is quite simple and will be omitted.
\end{remark}

\subsection{Proof of Theorem \ref{t:strongconvexityconvergence}}
We now state the Lemmas needed to prove Theorem \ref{t:strongconvexityconvergence}. We first establish a notion of strong convexity of $F(\pmu)$ with respect to $W_2$ metric.

\begin{Lemma}\label{l:klisstronglyconvex}
If $\log \p^*(x)$ is $m$ strongly convex, then 
\begin{equation}\label{e:0thorderstrongconvexity}
F(\pmu_t)\leq (1-t)F(\pmu_0) + t F(\pmu_1) - \frac{m}{2}t(1-t) W_2^2(\pmu_0,\pmu_1) 
\end{equation}
for all $\pmu_0, \pmu_1\in \Pspace(\Re^d)$ and $t\in[0,1]$, let $\pmu_t:[0,1]\to \Pspace(\Re^d)$ be the constant-speed geodesic between $\pmu_0$ and $\pmu_1$. (recall from \eqref{e:constantspeedgeodesiccharacterization} that If $v_{\pmu_0}^{\pmu_1}$ is the optimal displacement map from $\pmu_0$ to $\pmu_1$, then $\pmu_t = (Id+t\cdot v_{\pmu_0}^{\pmu_1})_\# \pmu_0$.)

Equivalently,
\begin{equation}\label{e:1storderstrongconvexity}
F(\pmu_1) \geq F(\pmu_0) + \D_{\pmu_0}(v_{\pmu_0}^{\pmu_1}) + \frac{m}{2}W_2^2(\pmu_0, \pmu_1)
\end{equation}
We call this the m-strong-geodesic-convexity of $F$ wrt the $W_2$ distance.
\end{Lemma}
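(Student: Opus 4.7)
The plan is to decompose $F$ into its potential-energy and (negative) entropy components and to show that each is geodesically convex along the displacement interpolation $\pmu_t = (Id + t v_{\pmu_0}^{\pmu_1})_\# \pmu_0$, which can equivalently be written as $\pmu_t = ((1-t)Id + tT)_\# \pmu_0$ for the Brenier optimal transport map $T = Id + v_{\pmu_0}^{\pmu_1}$ satisfying $T_\# \pmu_0 = \pmu_1$. Using $U = -\log \p^* + C$, we can write (up to additive constants) $F(\pmu) = \int U \, d\pmu + \int \pmu \log \pmu \, dx$. For the potential term, $\int U \, d\pmu_t = \int U((1-t)x + tT(x)) \, \pmu_0(x)\, dx$, and pointwise $m$-strong convexity of $U$ gives
\begin{equation*}
U((1-t)x + tT(x)) \;\leq\; (1-t)U(x) + t\, U(T(x)) - \tfrac{m}{2}\, t(1-t)\|T(x) - x\|_2^2 .
\end{equation*}
Integrating against $\pmu_0$, using $T_\# \pmu_0 = \pmu_1$, and recognizing that $\int \|T(x) - x\|_2^2 \, \pmu_0(x)\, dx = W_2^2(\pmu_0,\pmu_1)$ yields $m$-strong convexity of $t \mapsto \int U \, d\pmu_t$ along the geodesic.

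For the entropy term, the argument is McCann's displacement convexity as developed in \cite{Ambrosio}. Setting $T_t(x) = (1-t)x + tT(x)$ and applying the change-of-variables formula,
\begin{equation*}
\int \pmu_t(y) \log \pmu_t(y)\, dy \;=\; \int \pmu_0(x)\, \log \pmu_0(x)\, dx \;-\; \int \pmu_0(x)\, \log \det \nabla T_t(x)\, dx .
\end{equation*}
Brenier's theorem writes $T = \nabla \phi$ for some convex $\phi$, so $\nabla T(x)$ is symmetric positive semidefinite and $\nabla T_t(x) = (1-t)I + t\nabla T(x)$ is positive definite; concavity of $\log \det$ on the positive-definite cone then implies that $t \mapsto -\log \det \nabla T_t(x)$ is convex for every $x$, giving (ordinary) convexity of $t \mapsto \int \pmu_t \log \pmu_t \, dy$. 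Summing with the potential-energy bound produces \eqref{e:0thorderstrongconvexity}.

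To deduce \eqref{e:1storderstrongconvexity}, I rearrange \eqref{e:0thorderstrongconvexity} as
\begin{equation*}
\frac{F(\pmu_t) - F(\pmu_0)}{t} \;\leq\; F(\pmu_1) - F(\pmu_0) - \tfrac{m}{2}(1-t)W_2^2(\pmu_0,\pmu_1)
\end{equation*}
and let $t \to 0^+$. By the remark following \eqref{e:continuityequation}, the displacement map $v_{\pmu_0}^{\pmu_1}$ satisfies the continuity equation for $\pmu_t$ at $t = 0$, so Lemma \ref{l:firstvariationislinear} identifies the limit of the left-hand side with $\D_{\pmu_0}(v_{\pmu_0}^{\pmu_1})$, which rearranges to \eqref{e:1storderstrongconvexity}. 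The main obstacle is the entropy step: establishing displacement convexity rigorously requires Brenier's theorem, concavity of $\log \det$ on the positive-definite cone, and enough regularity of $T_t$ to justify the change-of-variables formula. Since this is precisely McCann's theorem for the internal-energy functional, I would cite \cite{Ambrosio} for the technical details rather than reproduce the standard argument.
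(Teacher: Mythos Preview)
Your proposal is correct and the derivation of \eqref{e:1storderstrongconvexity} from \eqref{e:0thorderstrongconvexity} is identical to the paper's: rearrange, divide by $t$, let $t\to 0$, and invoke Lemma~\ref{l:firstvariationislinear} together with the remark after \eqref{e:continuityequation}. For \eqref{e:0thorderstrongconvexity} itself the paper is even terser than you are: it simply cites Theorem~9.4.11 of \cite{Ambrosio} rather than spelling out the potential/entropy decomposition and McCann's displacement-convexity argument, so what you have written is essentially an unpacking of that citation rather than a different route.
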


Next, we use the $m$ strong geodesic convexity of $F$ to upper bound  $F(\pmu) - F(\p^*)$ by $\frac{1}{2m}\|\D_{\pmu}\|_*^2$ (for any $\pmu\in \Pspace(\Re^d)$). This is analogous to how $f(x) - f(x^*) \leq \frac{1}{2m}\|\nabla f(x)\|_2^2$ for standard $m$-strongly-convex functions in $R^d$.
\begin{Lemma}\label{l:upperboundbygradientdualnorm}
Under our assumption that $-\log \p^*(x)$ is $m$ strongly convex, we have that for all $\pmu\in \Pspace(\Re^d)$,
$$F(\pmu) - F(\p^*) \leq \frac{1}{2m}\|\D_{\pmu}\|_*^2$$
\end{Lemma}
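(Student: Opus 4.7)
The plan is to mimic the standard finite-dimensional proof that $m$-strong convexity implies the Polyak–Łojasiewicz bound $f(x)-f(x^*)\leq \frac{1}{2m}\|\nabla f(x)\|^2$, with Lemma \ref{l:klisstronglyconvex} playing the role of the first-order strong convexity inequality and Lemma \ref{l:firstvariationdualnorm} providing the dual norm control.

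First I would invoke the first-order geodesic convexity inequality \eqref{e:1storderstrongconvexity} with $\pmu_0 = \pmu$ and $\pmu_1 = \p^*$. Writing $v := v_{\pmu}^{\p^*}$ for the optimal displacement map from $\pmu$ to $\p^*$, this gives
\[
F(\p^*) \geq F(\pmu) + \D_{\pmu}(v) + \tfrac{m}{2} W_2^2(\pmu,\p^*),
\]
which I rearrange to
\[
F(\pmu) - F(\p^*) \leq -\D_{\pmu}(v) - \tfrac{m}{2} W_2^2(\pmu,\p^*).
\]

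Next I would bound the linear term. By the definition of the dual norm $\|\D_\pmu\|_* = \sup_{\|v'\|_{\Lp{\pmu}}\leq 1} \D_\pmu(v')$ from Lemma \ref{l:firstvariationdualnorm}, we have $-\D_\pmu(v) \leq \|\D_\pmu\|_* \cdot \|v\|_{\Lp{\pmu}}$. Since $v$ is the optimal displacement map between $\pmu$ and $\p^*$, its $\Lp{\pmu}$-norm equals $W_2(\pmu,\p^*)$ by the definition of $W_2$ and the characterization of the optimal coupling. Substituting yields
\[
F(\pmu) - F(\p^*) \leq \|\D_\pmu\|_* \cdot W_2(\pmu,\p^*) - \tfrac{m}{2} W_2^2(\pmu,\p^*).
\]

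Finally, I would eliminate $W_2(\pmu,\p^*)$ by maximizing the right-hand side over $r := W_2(\pmu,\p^*) \geq 0$. The expression $\|\D_\pmu\|_*\, r - \tfrac{m}{2} r^2$ is a concave quadratic in $r$, maximized at $r = \|\D_\pmu\|_*/m$ with value $\|\D_\pmu\|_*^2 / (2m)$. This gives the desired inequality $F(\pmu) - F(\p^*) \leq \tfrac{1}{2m}\|\D_\pmu\|_*^2$. The only subtle point is justifying the identity $\|v_\pmu^{\p^*}\|_{\Lp{\pmu}} = W_2(\pmu,\p^*)$, but this is immediate from the optimal-transport-map characterization in Section \ref{s:background}: the optimal coupling is $\gamma^* = (Id, Id + v_\pmu^{\p^*})_\#\pmu$, so $W_2^2(\pmu,\p^*) = \mathbb{E}_\pmu[\|v_\pmu^{\p^*}(x)\|_2^2]$. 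Everything else is algebra, and no obstacle remains.
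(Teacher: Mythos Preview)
Your proof is correct and follows essentially the same route as the paper: both start from the first-order strong-convexity inequality \eqref{e:1storderstrongconvexity} with $\pmu_0=\pmu$, $\pmu_1=\p^*$, use $W_2^2(\pmu,\p^*)=\|v_\pmu^{\p^*}\|_{\Lp{\pmu}}^2$, and then maximize the resulting expression to obtain $\frac{1}{2m}\|\D_\pmu\|_*^2$. The only cosmetic difference is that the paper maximizes $-\D_\pmu(v)-\tfrac{m}{2}\|v\|_{\Lp{\pmu}}^2$ directly over vector fields $v$ (finding the optimizer along the ray $c\cdot v^*$), whereas you first apply the dual-norm bound and then optimize over the scalar $r=W_2(\pmu,\p^*)$; these are equivalent.
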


Now, recall $\p_t$ from \eqref{e:discretelangevin}.
We use strong convexity to obtain a bound on $\Ep{\p_t}{\|x\|_2^2}$ for all $t$. This will be important for bounding the discretization error in conjunction with Lemma \ref{l:metricderivativefordiscretizationissmall}

\begin{Lemma}\label{l:strongconvexityboundexsq}
Let $\p_t$ be as defined in \eqref{e:discretelangevin}. If $\p_0$ is such that $\Ep{\p_0}{\|x\|_2^2}\leq \frac{4d}{m}$, and $h\leq \frac{1}{L}$ in the definition of \eqref{e:discretelangevin}, then  for all $t\in \Re^+$,
$$E_{\p_t} \|x\|^2\leq \frac{4d}{m}$$
\end{Lemma}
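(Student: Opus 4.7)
The plan is to prove this by step-wise induction: show that if $\Ep{\p_{ih}}{\|x\|_2^2}\leq 4d/m$, then $\Ep{\p_t}{\|x\|_2^2}\leq 4d/m$ for every $t\in[ih,(i+1)h]$. Since the base case $t=0$ is the hypothesis, this closes the induction. The whole argument boils down to analyzing the second-moment evolution across a single discretization interval.

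For the per-step computation, I would write the explicit solution of \eqref{e:discretelangevin} on $[\tau(t),t]$:
\begin{equation*}
x_t = x_{\tau(t)} - (t-\tau(t))\,\nabla U(x_{\tau(t)}) + \sqrt{2}\bigl(B_t - B_{\tau(t)}\bigr).
\end{equation*}
Squaring, taking expectations, and using that $B_t-B_{\tau(t)}$ is independent of $x_{\tau(t)}$ (so the two cross terms vanish) gives, with $\delta\triangleq t-\tau(t)$,
\begin{equation*}
\Ep{\p_t}{\|x\|_2^2} = \Ep{\p_{\tau(t)}}{\|x\|_2^2} - 2\delta\,\mathbb{E}\lin{x_{\tau(t)},\nabla U(x_{\tau(t)})} + \delta^2\,\mathbb{E}\|\nabla U(x_{\tau(t)})\|_2^2 + 2\delta d.
\end{equation*}

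Next I would apply the two assumptions on $U$. Strong convexity together with $\nabla U(0)=0$ yields the pointwise bound $\lin{x,\nabla U(x)}\geq m\|x\|_2^2$. Smoothness together with convexity (the standard co-coercivity inequality) gives $\|\nabla U(x)\|_2^2 \leq L\lin{x,\nabla U(x)}$. Combining these with $\delta\leq h\leq 1/L$:
\begin{equation*}
-2\delta\,\mathbb{E}\lin{x_{\tau(t)},\nabla U(x_{\tau(t)})} + \delta^2 L\,\mathbb{E}\lin{x_{\tau(t)},\nabla U(x_{\tau(t)})} = -\delta(2-\delta L)\,\mathbb{E}\lin{x_{\tau(t)},\nabla U(x_{\tau(t)})} \leq -\delta m \Ep{\p_{\tau(t)}}{\|x\|_2^2},
\end{equation*}
so altogether $\Ep{\p_t}{\|x\|_2^2} \leq (1-\delta m)\Ep{\p_{\tau(t)}}{\|x\|_2^2} + 2\delta d$.

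Under the inductive hypothesis $\Ep{\p_{\tau(t)}}{\|x\|_2^2}\leq 4d/m$, the right-hand side is at most $(1-\delta m)\cdot 4d/m + 2\delta d = 4d/m - 2\delta d \leq 4d/m$, which advances the induction. I expect the main (minor) obstacle to be the careful justification that $x_{\tau(t)}$ is $\mathcal{F}_{\tau(t)}$-measurable so that the two cross terms involving $B_t-B_{\tau(t)}$ indeed have zero expectation; everything else is a deterministic inequality. Finally, the lemma's hypothesis $\Ep{\p_0}{\|x\|_2^2}\leq 4d/m$ is compatible with the Theorem~\ref{t:strongconvexityconvergence} initialization $\p_0=N(0,\frac{1}{m}I)$, for which $\Ep{\p_0}{\|x\|_2^2}=d/m$.
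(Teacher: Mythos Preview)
Your proof is correct and follows essentially the same route as the paper: induction over discretization steps, expanding the one-step update to obtain $\Ep{\p_t}{\|x\|_2^2}\le(1-\delta m)\Ep{\p_{\tau(t)}}{\|x\|_2^2}+2\delta d$, and checking that $4d/m$ is invariant under this recursion. The only cosmetic differences are that the paper packages the contraction as the gradient-step inequality $\|x-\delta\nabla U(x)\|_2\le(1-m\delta)\|x\|_2$ rather than via co-coercivity, and finishes with a (redundant) case split on whether $\Ep{\p_{\tau(t)}}{\|x\|_2^2}\gtrless 2d/m$, whereas your direct substitution $(1-\delta m)\cdot 4d/m+2\delta d=4d/m-2\delta d\le 4d/m$ is cleaner.
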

Finally, we put everything together to prove Theorem \ref{t:strongconvexityconvergence}.

\begin{proof}{Proof of Theorem \ref{t:strongconvexityconvergence}}

We first note that $h=\frac{m\epsilon}{16L^2}\leq \frac{1}{L}$. 

By Lemma \ref{l:strongconvexityboundexsq}, for all $t$, $\Ep{\p_t}{\|x\|_2^2}\leq \frac{4d}{m}$. Combined with Lemma \ref{l:metricderivativefordiscretizationissmall}, we get that for all $t\in \Re^+$
$$\at{\dds F(\p_s) - F(\q_s^t)}{s=t}\leq \left(4L^2h \sqrt{\frac{d}{m}} + 2L \sqrt{hd}\right) \cdot \|\D_{\p_t}\|_*$$

Suppose that $F(\p_t) - F(\p^*)\geq \epsilon$, and let
$$h= \frac{m\epsilon}{16dL^2} \leq \frac{1}{16}\min\lrbb{ \frac{m}{L^2}\sqrt{\frac{\epsilon}{d}},\frac{m\epsilon}{L^2 d}}$$
then $\forall t$
\begin{align*}
 \at{\dds F(\p_s) - F(\q_s^t)}{s=t} \leq & \left(4L^2 h \sqrt{\frac{d}{m}} + 2L \sqrt{hd}\right)\\
\leq &  \frac{1}{2}\sqrt{m\epsilon}\|\D_{\p_t}\|_* \leq \frac{1}{2}\|\D_{\p_t}\|_*^2
\end{align*}
Where the last inequality is because Lemma \ref{l:upperboundbygradientdualnorm} and the assumption that $F(\p_t) - F(\p^*) \geq \epsilon$ together imply that $\|\D_{\p_t}\|_*\geq \sqrt{2m\epsilon}$. 

So combining Lemma \ref{l:exactflowisfast} and Lemma \ref{l:addingvelocity}, we have
\begin{align*}
\ddt F(\p_t) 
&= \at{\dds F(\q_s^t)}{s=t}+ \at{\dds F(\p_s) - F(\q_s^t)}{s=t}\\
&\leq -\|\D_{\p_t}\|_*^2 + \frac{1}{2} \|\D_{\p_t}\|_*^2\\
&= -\frac{1}{2} \|\D_{\p_t}\|_*^2\\
&\leq -m(F(\p_t) - F(\p^*))
\numberthis \label{e:ptdecay}
\end{align*}
Where the last line once again follows from Lemma \ref{l:upperboundbygradientdualnorm}.

To handle the case when $F(\p_t) - F(\p^*) \leq \epsilon$, we use the following argument:
\begin{enumerate}
\item We can conclude that $F(\p_t)- F(\p^*)\geq \epsilon$ implies $\ddt F(\p_t) \leq 0$. 

\item By the results of Lemma \ref{l:regularityofdensitiesandcurves} and Lemma \ref{l:pthasboundedmetricderivative}, for all $t$, $|\p_t'|$ is finite and $\|\D_{\p_t}\|$ is finite, so $\ddt F(\p_t)$ is finite and $F(\p_t)$ is continuous in $t$.

\item Thus, if $F(\p_t)\leq \epsilon$ for some $t\leq kh$, then $F(\p_s)\leq \epsilon$ for all $s\geq t$ as $F(\p_t)\geq \epsilon$ implies $\ddt F(\p_t) \leq 0$ and $F(\p_t)$ is continuous in $t$. Thus $F(\p_{kh}) - F(\p^*)\leq \epsilon$.
\end{enumerate}

Thus, we need only consider the case that $F(\p_t) \geq \epsilon$ for all $t\leq kh$. This means that \eqref{e:ptdecay} holds for all $t\leq kh$.

By Gronwall's inequality, we get 
$$F(\p_{kh}) - F(\p^*)\leq \lrp{F(\p_0) - F(\p^*)} \exp(-mkh)$$

We thus need to pick
$$k = \frac{\frac{1}{m}\log\frac{F(\p_0)-F(\p^*)}{\epsilon}}{h} = 16\frac{L^2}{m^2}\frac{d\log\frac{F(\p_0)-F(\p^*)}{\epsilon}}{\epsilon}$$

Using the fact that $\p_0=N(0,\frac{1}{m})$. Using $L$-smoothness and $m$-strong convexity, we can show that
$$-\log \p^*(x) \leq \frac{L}{2}\|x\|_2^2 + \frac{d}{2}\log(\frac{2\pi}{m})$$, 
and
$$\log {\p_0}(x) = -\frac{m}{2}\|x\|_2^2 - \frac{d}{2}\log(\frac{2\pi}{m})$$
. We thus get that $F(\p_0) - F(\p^*) = \KL{\p_0}{\p^*}\leq \frac{dL}{m}$, so 
$$k = 16\frac{L^2}{m^2} \frac{d\log \frac{dL}{m\epsilon}}{\epsilon}$$
\end{proof}

\section{Weak convexity result}
\label{s:nonsc}
In this section, we study the case when $\log \p^*$ is not $m$ strongly convex (but still convex and $L$ smooth). Let $\ppi_h$ be the stationary distribution of \eqref{e:discretelangevin} with stepsize $h$.

We will assume that we can choose an initial distribution $\p_0$ which satisfies 
\begin{equation}\label{e:c1definition}
W_2(\p_0, \p^*)= C_1
\end{equation} and  
\begin{equation}\label{e:c2definition}
\sqrt{E_{\p^*}\|x\|_2^2} = C_2
\end{equation}.
Let $h^\prime$ be the largest stepsize such that 
\begin{equation}\label{e:hprimeassumption}
W_2(\ppi_h, p^*)\leq C_1 \quad, \forall h\leq h^\prime
\end{equation}

\subsection{Theorem statement and discussion}
\begin{theorem}\label{t:nonstrongconvexityconvergence}
Let $C_1$, $C_2$ and $h^\prime$ be defined as in the beginning of this section. 

Let $x_t$ and $\p_t$ be as defined in \eqref{e:discretelangevin} with $\p_0$ satisfying \eqref{e:c1definition}.
If 
\begin{align*}
h
&=\frac{1}{48}\min\left\{\frac{\epsilon}{C_1(C_1+C_2)L^2}, \frac{\epsilon^2}{ C_1^2dL^2}, h^\prime\right\}\\
&=\frac{1}{48}\min\left\{\frac{\epsilon}{C_1 C_2 L^2}, \frac{\epsilon^2}{C_1^2d L^2}, h^\prime\right\}
\end{align*}
and
$$k = \frac{2C_1^2}{\epsilon h} + \frac{2C_1^2 \log (F(\r^0) - F(\p^*))}{h}$$

Then
$\KL{\p_{kh}}{\p^*}\leq \epsilon$

\end{theorem}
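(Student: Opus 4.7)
The plan is to mirror the proof of Theorem \ref{t:strongconvexityconvergence}, replacing the strongly-convex lower bound of Lemma \ref{l:upperboundbygradientdualnorm} by its $m=0$ analogue. Applying \eqref{e:1storderstrongconvexity} with $m=0$ to the pair $(\p_t,\p^*)$ and Cauchy--Schwarz gives
\[F(\p_t)-F(\p^*)\leq -\D_{\p_t}(v_{\p_t}^{\p^*})\leq \|\D_{\p_t}\|_*\cdot W_2(\p_t,\p^*),\]
so if $W_2(\p_t,\p^*)$ is uniformly bounded by some $\tilde C_1=O(C_1)$, then $\|\D_{\p_t}\|_*^2\geq (F(\p_t)-F(\p^*))^2/\tilde C_1^2$. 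This is the weak-convexity substitute for the strongly-convex bound $\|\D_{\p_t}\|_*^2\geq 2m(F(\p_t)-F(\p^*))$ used earlier.

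The first step is to establish $W_2(\p_t,\p^*)\leq 3C_1$ for all $t\geq 0$ when $h\leq h'$. The triangle inequality and \eqref{e:hprimeassumption} give $W_2(\p_t,\p^*)\leq W_2(\p_t,\ppi_h)+C_1$; then Lemma \ref{l:nonstronglyconvexWassersteindistanceiscontractive} (non-strongly-convex Wasserstein contractivity) makes $W_2(\p_t,\ppi_h)$ non-increasing in $t$, so $W_2(\p_t,\ppi_h)\leq W_2(\p_0,\ppi_h)\leq W_2(\p_0,\p^*)+W_2(\p^*,\ppi_h)\leq 2C_1$. As a byproduct this substitutes for Lemma \ref{l:strongconvexityboundexsq}: $\sqrt{\Ep{\p_{\tau(t)}}{\|x\|_2^2}}\leq W_2(\p_{\tau(t)},\p^*)+C_2\leq 3C_1+C_2$.

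The second step replays Lemmas \ref{l:addingvelocity}, \ref{l:exactflowisfast} and \ref{l:metricderivativefordiscretizationissmall} exactly as in the strongly-convex proof to obtain
\[\ddt F(\p_t)\leq -\|\D_{\p_t}\|_*^2+\bigl(2L^2 h(3C_1+C_2)+2L\sqrt{hd}\bigr)\|\D_{\p_t}\|_*.\]
The three conditions on $h$ in the theorem statement are calibrated so that, whenever $F(\p_t)-F(\p^*)\geq \epsilon$ (equivalently $\|\D_{\p_t}\|_*\geq \Omega(\epsilon/C_1)$ by the weak-convexity bound), the discretization term is dominated by $\tfrac12\|\D_{\p_t}\|_*^2$, yielding $\ddt F(\p_t)\leq -c(F(\p_t)-F(\p^*))^2/C_1^2$ for a universal constant $c>0$.

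Integrating this inequality in two regimes produces the two terms in $k$: while $F(\p_t)-F(\p^*)\geq 1$, use $(F-F^*)^2\geq F-F^*$ to turn the bound into an exponential contraction with rate $\Theta(1/C_1^2)$, reaching $F-F^*\leq 1$ in time $O(C_1^2\log(F(\p_0)-F(\p^*)))$; thereafter the pure quadratic decay $\ddt F\leq -c(F-F^*)^2/C_1^2$ gives $F-F^*\leq \epsilon$ in additional time $O(C_1^2/\epsilon)$. Dividing total time by $h$ yields the stated $k$. The main obstacle I expect is the first step: without strong convexity, uniform control of $W_2(\p_t,\p^*)$ is no longer automatic and must be bootstrapped from the Wasserstein-contractivity lemma together with the $h\leq h'$ assumption; once that is in hand, the rest is a faithful translation of the standard subgradient-method analysis for convex functions to the Wasserstein setting.
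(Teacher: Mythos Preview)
Your proposal follows the paper's proof essentially line for line: the weak-convexity substitute $F(\p_t)-F(\p^*)\le \|\D_{\p_t}\|_* W_2(\p_t,\p^*)$ is exactly the paper's Lemma~\ref{l:nonscupperboundbygradientdualnorm}, the second-moment control via $W_2(\p_{\tau(t)},\p^*)+C_2$ matches Lemma~\ref{l:vkvariancebound}, and the two-regime integration at the end is identical to the paper's.

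There is one small technical gap you should be aware of, precisely at the spot you flagged as the main obstacle. You assert that Lemma~\ref{l:nonstronglyconvexWassersteindistanceiscontractive} makes $W_2(\p_t,\ppi_h)$ non-increasing in $t$ for \emph{all} $t$, but the synchronous-coupling argument only yields this at the grid points $t=kh$: for $t\in(kh,(k+1)h)$ a partial step of size $t-\tau(t)<h$ does \emph{not} leave $\ppi_h$ invariant, so the coupling no longer pairs $\p_t$ with $\ppi_h$. The paper patches this in Corollary~\ref{c:nonstronglyconvexWassersteindistancebound} by comparing $\p_t$ to $\ppi_{t-\tau(t)}$ (the stationary law for the smaller stepsize, which is also within $C_1$ of $\p^*$ by the $h\le h'$ assumption) and then chaining triangle inequalities; this is why the paper ends up with $W_2(\p_t,\p^*)\le 4C_1$ rather than your $3C_1$. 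Once you insert this fix, your argument is the paper's argument.
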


Once again, applying Pinsker's inequality, we get that the above choice of $k$ and $t$ yields $d_{TV}(\r^k,\p^*) \leq \sqrt{\epsilon}$. Without strong convexity, we cannot get a bound on $W_2$ from bounding $F(\r^k)-F(\p^*)$ like we did in corollary \ref{c:convergenceintvandw2}.

In \cite{theoretical}, a proof in the non-strongly-convex case was obtained by running Langevin MCMC on 
$$ \tilde{\p}^* \propto  \p^* \cdot \exp(-\frac{\delta}{d} \|x\|_2^2) $$
$\log \tilde{\p}^*$ is thus strongly convex with $m = \frac{\delta}{d}$, and $d_{TV}(\p^*, \tilde{\p}^*)\leq \delta$. By the results of \cite{theoretical}, or \cite{nonasymptotic}, or Theorem \ref{t:strongconvexityconvergence}, we need
\begin{equation}\label{e:nonscdalalyan}
k = \tilde{O}(\frac{d^3}{\delta^4})
\end{equation}
iterations to get $d_{TV}(\p_{kh},\p^*)\leq \delta$.

On the other hand, if we assume $\log (F(\p_0) - F(\p^*)) \leq \frac{1}{\epsilon}$ and $ h^\prime\geq \frac{1}{10}\min\left\{\frac{\epsilon}{C_1 C_2 L^2}, \frac{\epsilon^2}{C_1^2d L^2}\right\}$
the results of Theorem \ref{t:nonstrongconvexityconvergence} implies that 
$$h = \frac{\epsilon}{L^2 C_1} \min\left\{\frac{1}{C_2}, \frac{\epsilon}{dC_1}\right\}$$
To get $d_{TV}(\p_{kh},\p^*)\leq \delta$, we need
$$k = \frac{L^2 C_1^3}{\delta^4} \max\left\{C_2, \frac{dC_1}{\delta^2}\right\}$$
Even if we ignore $C_1$ and $C_2$, our result is not strictly better than \eqref{e:nonscdalalyan} as we have a worse dependence on $\delta$. However, we do have a better dependence on $d$.

The proof of Theorem \ref{t:nonstrongconvexityconvergence} is quite similar to that of Theorem \ref{t:strongconvexityconvergence}, so we defer it to the appendix.

\newpage
\null
\newpage
\section{Supplementary Materials}

\begin{proof}{Proof of Lemma \ref{l:firstvariationislinear}}
The proof is directly from results in \cite{Ambrosio}. See Theorem 10.4.9, with $\mathcal{F}(\mu|\gamma) = \KL{\mu}{\gamma}$, with $\mu=\pmu$, $\gamma = \p^*$, $\sigma = \frac{\pmu}{\p^*}$, $F(\rho) = \rho \log \rho$, $L_F(\sigma) = \sigma$, and $w_{\pmu} = \frac{\nabla L_F(\sigma)}{\sigma} = \nabla \log \frac{\pmu}{\p^*}$. The expression for $\ddt F(\p_t)$ comes from expression 10.1.16 (section E of chapter 10.1.2, page 233). See also expressions 10.4.67 and 10.4.68.

(One can also refer to Theorem 10.4.13 and Theorem 10.4.17 for proofs of $w_{\pmu}$ for the KL-divergence functional in more general settings.) By Lemma \ref{l:regularityofdensitiesandcurves}, $w_{\p_t}$ is well defined for all $t$. 
\end{proof}

\begin{proof}{Proof of Lemma \ref{l:metricderivativeandvnorm}}
Theorem 8.3.1 of \cite{Ambrosio}.

\end{proof}

\begin{proof}{Proof of Lemma \ref{l:firstvariationdualnorm}}
By definition of $\D_{\pmu_t}(v)$ in \eqref{d:derivativeinv} and Lemma \ref{l:firstvariationislinear} and Cauchy Schwarz. 
\end{proof}

\begin{proof}{Proof of Lemma \ref{l:addingvelocity}}
In this proof, we treat $t$ as a fixed but arbitrary number, and prove the Lemma for all $t\in \Re^+$. We will use $x_s$, $y_s^t$, $z_s^t$, $\p_s$, $\q_s^t$ and $\g_s^t$ as defined in \eqref{e:discretelangevin}, \eqref{e:ystdynamic} and \eqref{e:zstdynamic}.

First, consider the case when $t=\tau(t)$. By definition, $x_t=y_t^t=z_t^t$, and $\p_t = \q_t^t = \g_t^t$. By Fokker Planck,
\begin{align*}
\at{\dds \p_s(x)}{s=t} 
&= -\nabla U(x_t) + \tr(\nabla^2 \p_t)\\
&= -\nabla U(y_t^t) + \tr(\nabla^2 \q_t^t)\\
&= \at{\dds \q_s^t(x)}{s=t}
\end{align*}
On the other hand
$$ \at{d z_s^t}{s=t} = -\nabla U(z_{\tau(t)}^t) + \nabla U(z_t^t) = -\nabla U(x_t) + \nabla U(x_t)=0$$
Thus
$\at{\dds \g_s^t}{s=t}=0$
So Lemma \eqref{l:addingvelocity} holds.

In the remainder of this proof, we assume that $t\neq \tau(t)$.

For a given $\Theta\in \Re^{2d}$, we let $\Pi_1(\Theta)$ denote the projection of $\Theta$ onto its first $d$ coordinates, and $\Pi_2(\Theta)$ denote the projection of $\Theta$ onto its last $d$ coordinates. With abuse of notation, for $\P\in \Pspace(\Re^{2d})$, we let $\Pi_1 (\P)$ and $\Pi_2 (\P)$ denote the corresponding marginal densities.

We will consider three stochastic processes: $\Theta_s, \Lambda_s^t, \Psi_s^t$ over $\Re^{2d}$ for $s\in [\tau(t), \tau(t) + h)$.

First, we introduce the stochastic process $\Theta_s$ for $s\in [\tau(t), \tau(t) + h)$
\begin{align*}
\Theta_{\tau(t)} &= \cvec{x_{\tau(t)}}{-\nabla U(x_{\tau(t)})}\\
d \Theta_s &= \cvec{\Pi_2(\Theta_s)}{0} dt + \cvec{\sqrt{2}dB_t}{0} \quad  \text{for } s\in [\tau(t), \tau(t)+h)
\end{align*}
We let $\P_s$ denote the density for $\Theta_s$. Intuitively, $\P_s$ is the joint density between $x_s$ and $-\nabla U(x_{\tau(t)})$. One can verify that $\Pi_1(\Theta_s) = x_s$ and $\Pi_1(\P_s)=\p_s$. By Fokker-Planck, we have $\forall \Theta\in \Re^{2d}$
\begin{align*}
\at{\dds \P_s (\Theta)}{s=t} = &-\nabla \cdot \lrp{\P_t (\Theta) \cdot  \cvec{\Pi_2(\Theta)}{0}} \\
&\quad + \sum_{i=1}^d \frac{\del^2}{\del \Theta_i^2} \P_t(\Theta)
\numberthis \label{e:Pfokkerplanck}
\end{align*}

Next, for any given $t$, we introduce the stochastic process $\Lambda_s^t$ for $s\in [\tau(t), \tau(t) + h)$.
\begin{align*}
\Lambda_s^t &= \Theta_s &\text{for } s\leq t \\
d \Lambda_s^t &= \cvec{-\nabla U(\Pi_1 (\Lambda_s^t))}{0}ds + \cvec{\sqrt{2} dB_s}{0} & \text{for}\ s\geq t
\end{align*}
Let $\Q_s^t$ denote the density for $\Lambda_s^t$. One can verify that $\Pi_1(\Lambda_s^t) = y_s^t$ and $\Pi_1(\Q_s^t)=\q_s^t$. By Fokker-Planck, we have $\forall \Theta\in \Re^{2d}$
\begin{align*}
\at{\dds \Q_s^t (\Theta)}{s=t} =& -\nabla \cdot \lrp{\Q_t^t (\Theta) \cdot  \cvec{-\nabla U(\Pi_1 (\Theta))}{0}} \\
&\quad + \sum_{i=1}^d \frac{\del^2}{\del \Theta_i^2} \Q_t^t(\Theta)
\numberthis \label{e:Qfokkerplanck}
\end{align*}

Finally, define
\begin{align*}
\Psi_s^t &= \Theta_s &\text{for } s\leq t \\
d \Psi_s^t &= \cvec{\Pi_2(\Psi_s^t)+\nabla U(\Pi_1 (\Psi_s^t))}{0}ds \\
&\quad + \cvec{\sqrt{2} dB_s}{0} & \text{for}\ s\geq t
\end{align*}
Let $\G_s^t$ denote the density for $\Psi_s^t$. One can verify that $\Pi_1(\Psi_s^t) = z_s^t$ and $\Pi_1(\G_s^t)=\g_s^t$. By Fokker-Planck, we have $\forall \Theta\in \Re^{2d}$

\begin{equation}
\label{e:Rfokkerplanck}
\at{\dds \G_s^t (\Theta)}{s=t} = -\nabla \cdot \lrp{\G_t^t (\Theta) \cdot  \cvec{\Pi_2(\Theta)+\nabla U(\Pi_1 (\Theta))}{0}}
\end{equation}

By definition, $\Theta_t = \Lambda_t^t = \Psi_t^t$ almost surely, and  $\P_t = \Q_t^t = \G_t^t$. Taking the difference between  \eqref{e:Pfokkerplanck}, \eqref{e:Qfokkerplanck} thus gives
\begin{align*}
\at{\dds \P_s(\Theta) - \Q_s^t(\Theta) }{s=t} =& -\nabla \cdot \lrp{\P_t \lrp{\Theta}\cdot  \cvec{\Pi_2(\Theta) + \nabla U(\Pi_1(\Theta))}{0}}\\
=& \at{\dds \G_s^t \lrp{\Theta}}{s=t}
\end{align*}
Finally, marginalizing out the last $d$ coordinates on both sides, and recalling that $\Pi_1(\P_s) = \p_s$, $\Pi_1(\Q_s^t) = \q_s^t$ and $\Pi_1(\G_s^t) = \g_s^t$, we prove the Lemma.
\end{proof}

\begin{proof}{Proof of Lemma \ref{l:exactflowisfast}}
The fact that $\q^t_s$ is the steepest descent follows from the fact that Fokker-Planck equation for Langevin diffusion yields, for all $x\in \Re^d$
\begin{align*}
\dds \q^t_s(x) = &\div (\q^t_s(x) \nabla \log \p^*(x)) +  \tr(\nabla^2 \q^t_s(x)) \\
= &\div \left(\q^t_s(x) \lrp{\nabla \log \frac{\p^*(x)}{\q^t_s(x)}}\right)
\end{align*}
By definition of \eqref{e:continuityequation}, we get that 
\begin{equation}
\label{e:tangentvelocityforqst}
v_s = \nabla \log \frac{\p^*(x)}{\q^t_s(x)}
\end{equation}
satisfies the continuity equation for $\q_s^t$. By Lemma \ref{l:firstvariationislinear},
$$w_{\q_s^t} = \nabla \log\lrp{\frac{\q_s^t}{\p^*}}$$ Thus
$$\dds F(\q_s^t) = \D_{\q_s^t}(v_s) = -\Ep{\q_s^t}{\|w_{\q_s^t}\|_2^2} = - \|\D_{\q_s^t}\|_*^2$$
Where the last equality is by Cauchy-Schwarz

\end{proof}

\begin{proof}{Proof of Lemma \ref{l:metricderivativefordiscretizationissmall}}
Consider $z_s^t$ and $\g_s^t$ as defined in \eqref{e:zstdynamic}. By Lemma \ref{l:addingvelocity}, $\at{\dds \g_s^t}{s=t} = \at{(\dds \p_s - \dds \q_s^t)}{s=t}$. The first variation of $F$, defined by 
$$\lim_{\epsilon\to 0}\frac{F(\pmu + \epsilon \Delta) - F(\pmu)}{\epsilon}=\int{\lrp{\frac{\delta F}{\delta \pmu}(\pmu)}(x) \cdot \Delta(x)} dx$$
 is linear (see Chapter 7.2 of \cite{Santambrogiobook}). (In the above, $\Delta: \Re^d \to \Re$ is an arbitrary $0$-mean perturbation). In addition, because $\p_t = \q_t^t = \g_t^t$, we have $\frac{\delta F}{\delta \pmu}(\p_t)=\frac{\delta F}{\delta \pmu}(\q_t^t) = \frac{\delta F}{\delta \pmu}(\g_t^t)$, we get that 
$$\at{\dds F(\g_s^t)}{s=t} = \at{\left(\dds F(\p_s) - \dds F(\q_s^t)\right)}{s=t}$$

We will upper bound $\at{|\g_s^{t\prime}|}{s=t}$, then apply Corollary \ref{c:upperboundratebymetricderivative}.
\begin{align*}
&\at{|\g_s^{t\prime}|}{s=t}\\
=&\lim_{\epsilon\to 0} \frac{1}{\epsilon} W_2(\g_{t+\epsilon}^t, \g_t^t) \\
\leq& \lim_{\epsilon\to 0}\frac{1}{\epsilon}\sqrt{\E{\left\| \epsilon( \nabla U(x_t) -\nabla U(x_{\tau(t)}) ) \right\|_2^2 }}\\
=& \sqrt{\E{\left\| \nabla U(x_t) -\nabla U(x_{\tau(t)}) \right\|_2^2}}\\
\leq& \sqrt{\E{L^2\|x_t-x_{\tau(t)}\|_2^2}}\\
=& L\sqrt{\E{\|(t-\tau(t))\nabla U(x_{\tau(t)}) + \sqrt{2} (B_t - B_{\tau(t)}) \|_2^2}}\\
\leq& 2L(t-\tau(t))\sqrt{\E{\|\nabla U(x_{\tau(t)})\|_2^2}} + 2L\sqrt{(t-\tau(t))d}\\
\leq& 2L(t-\tau(t))\sqrt{L^2\E{\|x_{\tau(t)}\|_2^2}} + 2L\sqrt{(t-\tau(t))d}
\end{align*}

Where the first line is by definition of metric derivative, second line is by the coupling between $\g_t^t$ and $\g_{t+\epsilon}^t)$ induced by the joint distribution $(z_t^t, z_{t+\epsilon}^t)$ and the fact that $z_{\tau(t)}^t = x_t$. The fourth line is by Lipschitz-gradient of $U(x)$, fifth line is by definition of $x_t$, sixth line is by variance of $B_t - B_0$, seventh line is once again by Lipschitz-gradient of $U(x)$.

Thus, we upper bound $\at{|\g_s^{t\prime}|}{s=t}$  by $2L^2(t-\tau(t))\sqrt{\E{\|x_{\tau(t)}\|_2^2}} + 2L\sqrt{(t-\tau(t))d}$. Applying Corollary \ref{c:upperboundratebymetricderivative}, and using the fact that for all $t$, $t-\tau(t) \leq h$, we get
\begin{align*}
&\at{\dds F(\g_s^t)}{s=t}\\
\leq &\left(2L^2h\sqrt{\E{\|x_{\tau(t)}\|_2^2}} + 2L\sqrt{hd}\right) \|\D_{\g_t^t}\|_*\\
\leq &\left(2L^2h\sqrt{\E{\|x_{\tau(t)}\|_2^2}} + 2L\sqrt{hd}\right) \|\D_{\p_t}\|_*
\end{align*}
The last line is because $\g_t^t=\p_t$ by definition.

\end{proof}

\begin{proof}{Proof of Lemma \ref{l:klisstronglyconvex}}
By Theorem 9.4.11 of \cite{Ambrosio},  $m$-strong-convexity of $\log \p^*$ implies geodesic convexity. Expression \eqref{e:0thorderstrongconvexity} then follows from the definition of geodesic convexity in definition 9.1.1 of \cite{Ambrosio}.

Rearrranging terms, dividing by $t$ and taking limit as $t\to 0$, we get
\begin{align*}
F(\pmu_1) 
&\geq F(\pmu_0) + \lim_{t\to 0 }\frac{F(\pmu_t) - F(\pmu_0)}{t} + \frac{m}{2}W_2^2(\pmu_0,\pmu_1)\\
&= F(\pmu_0) + \D_{\pmu_0}(v_{\pmu_0}^{\pmu_1}) + \frac{m}{2}W_2^2(\pmu_0,\pmu_1)
\end{align*}
The last equality follows by Lemma \ref{l:firstvariationislinear} and by the remark immediately following \eqref{e:continuityequation}.

We remark that the proof of \eqref{e:1storderstrongconvexity} is completely analogous to the proof of first-order characterization of strongly convex functions over $\Re^d$.
 
\end{proof}

\begin{proof}{Proof of Lemma \ref{l:upperboundbygradientdualnorm}}
We consider \eqref{e:1storderstrongconvexity}, and use two facts
\begin{enumerate}
\item For any $\pmu\in\Pspace(\Re^d)$, $\D_{\pmu}(v)$ is linear in $v$. (see \eqref{d:derivativeinv})
\item For any $\pmu,\pnu\in\Pspace(\Re^d)$, $W_2^2(\pmu, \pnu)= \mathbb{E}_{\pmu}{\|v_{\pmu}^{\pnu}(x)\|_2^2}$, by definition of $W_2$ and $v_{\pmu}^{\pnu}$ as the optimal displacement map.
\end{enumerate}
We apply Lemma \eqref{l:upperboundbygradientdualnorm} with $\pmu_0=\p^*$ and $\pmu_1=\pmu$. Let $v_{\pmu}^{\p^*}$ be the optimal displacement map from $\pmu$ to $\p^*$, so \eqref{e:1storderstrongconvexity} gives
\begin{align*}
F(\pmu) - F(\p^*) 
&\leq - \D_{\pmu}(v_{\pmu}^{\p^*}) - \frac{m}{2}W_2^2(\pmu,\p^*)\\
&= - \D_{\pmu}(v_{\pmu}^{\p^*}) - \frac{m}{2}\mathbb{E}_{\pmu}{\|v_{\pmu}^{\p^*}(x)\|_2^2}
\end{align*}
Let $v^* \triangleq \arg\max_{\|v\|_{\Lp{\pmu}}\leq 1} -\D_{\pmu}(v)$, so $\D_{\pmu}(v^*)= -\|\D_{\pmu}\|_*$ by linearity. We know that the maximizer of 
$$\arg\max_v - \D_{\pmu}(v) - \frac{m}{2}\mathbb{E}_{\pmu}{\|v_{\pmu}^{\p^*}(x)\|_2^2}= c\cdot v^*$$
for some real number $c$. Taking derivatives wrt $c$ gives $c=\frac{1}{m}\|\D_{\pmu}\|_*$. Thus we get

$$F(\pmu) - F(\p^*) \leq \frac{m}{2}\|\D_{\pmu}\|_*^2$$

\end{proof}

\begin{proof}{Proof of Lemma \ref{l:strongconvexityboundexsq}}
We prove this by induction on $k$. First, by definition of $\p_0 = N(0,\frac{1}{m})$, we get that 
$$\Ep{\p_t}{\|x\|_2^2} =\frac{d}{m} \leq \frac{4d}{m}\quad , \forall t\leq 0h$$

Next, we assume that for some $k$, and for all $t\leq kh$, $\Ep{\p_t}{\|x\|_2^2}\leq \frac{4d}{m}$.

For the inductive step, we consider $t\in (kh, (k+1)h]$

From \eqref{e:discretelangevin}, 
\begin{align*}
x_t 
&= x_{kh} - (t-kh)\nabla U(x_{kh}) + \sqrt{2} (B_t - B_{kh})
\end{align*}
By smoothness and strong convexity and the assumption that $\arg\min_x U(x) = 0$, we get that for all $x$ and for all $t$:
$$\|(x-(t-kh)\nabla U(x))-0\|_2\leq (1-mt) \|x-0\|_2$$
(note that $h\leq \frac{1}{L}$ implies that $t-kh\leq \frac{1}{L}$.)
So for all $t$
\begin{align*}
& \mathbb{E}_{x\sim\p_t} \|x\|_2^2 \\
=& \mathbb{E}_{x\sim\p_{kh}} \|x - (t-{kh}) \nabla U(x) + \sqrt{2}(B_t- B_{kh})\|_2^2\\
=& \mathbb{E}_{x\sim\p_{kh}} \|x - (t-{kh}) \nabla U(x)\|_2^2 + \mathbb{E}\|\sqrt{2}(B_t- B_{kh})\|_2^2\\
\leq& (1-mt) \mathbb{E}_{x\sim\p_{kh}} \|x\|_2^2 + 2dt\\
=& \mathbb{E}_{x\sim\p_{kh}}\|x\|_2^2  + (2dt - mt \mathbb{E}_{x\sim\p_{kh}}\|x\|_2^2 )
\end{align*}

By inductive hypothesis, we have $\mathbb{E}_{x\sim\p_{t}}\|x\|_2^2 \leq \frac{4d}{m}$ for all $t\leq kh$

If $\mathbb{E}_{x\sim\p_{kh}}\|x\|_2^2 \geq \frac{2d}{m}$, then $\mathbb{E}_{x\sim\p_t}\|x\|_2^2 \leq \mathbb{E}_{x\sim\p_{kh}}\|x\|_2^2 \leq \frac{4d}{m}$.

If $\mathbb{E}_{x\sim\p_{kh}}\|x\|_2^2 \leq \frac{2d}{m}$, then $\mathbb{E}_{x\sim\p_{t}}\|x\|_2^2\leq \frac{2d}{m} + \frac{2d}{L}\leq \frac{4d}{m}$ (by $t-{kh}\leq \frac{1}{L}$ and by $L \geq m$).

Thus if $\p_{kh}$ is such that $E_{\p_{kh}}\|x\|_2^2\leq \frac{4d}{m}$, then it must be that $E_{\p_t} \|x\|^2\leq \frac{4d}{m}$ for all $t\in(kh, (k+1)h]$, thus proving the inductive step.
\end{proof}

\subsection{Proof of Theorem \ref{t:nonstrongconvexityconvergence}}
First, we present a Lemma for upper bounding $F(\pmu) - F(\p^*)$ for $\pmu\in \Pspace(\Re^d)$ in the absence of strong convexity. The following Lemma plays an analogous role to Lemma \ref{l:upperboundbygradientdualnorm}.
\begin{Lemma}\label{l:nonscupperboundbygradientdualnorm}
Let $F$ be convex in $W_2$, then for all $\pmu\in\Pspace(\Re^d)$,
$$F(\pmu) - F(\p^*) \leq \|\D_{\pmu}\|_*W_2(\pmu,\p^*)$$
\end{Lemma}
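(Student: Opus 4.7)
The plan is to mimic the proof of Lemma \ref{l:upperboundbygradientdualnorm}, but with the strong convexity constant set to zero, and then close the bound via Cauchy--Schwarz instead of the quadratic term.

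First I would establish the first-order characterization of convexity of $F$ along geodesics: by repeating the derivation in Lemma \ref{l:klisstronglyconvex} with $m=0$ (convexity of $\log \p^*$ implies geodesic convexity of $F$ by Theorem 9.4.11 of \cite{Ambrosio}), one gets
$$F(\pnu) \geq F(\pmu) + \D_{\pmu}(v_{\pmu}^{\pnu})$$
for all $\pmu,\pnu \in \Pspace(\Re^d)$, where $v_{\pmu}^{\pnu}$ is the optimal displacement map from $\pmu$ to $\pnu$. This follows exactly as in the proof of \eqref{e:1storderstrongconvexity}, dropping the $\tfrac{m}{2}W_2^2$ term.

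Next I would apply this inequality with $\pnu = \p^*$ and rearrange to obtain
$$F(\pmu) - F(\p^*) \leq -\D_{\pmu}(v_{\pmu}^{\p^*}).$$
Now by the definition of the dual norm $\|\D_{\pmu}\|_*$ (Lemma \ref{l:firstvariationdualnorm}) and linearity of $\D_{\pmu}$ in $v$,
$$-\D_{\pmu}(v_{\pmu}^{\p^*}) \leq \|\D_{\pmu}\|_* \cdot \|v_{\pmu}^{\p^*}\|_{\Lp{\pmu}}.$$
Finally, by the definition of the optimal displacement map, $\|v_{\pmu}^{\p^*}\|_{\Lp{\pmu}}^2 = \Ep{\pmu}{\|v_{\pmu}^{\p^*}(x)\|_2^2} = W_2^2(\pmu,\p^*)$, so
$$F(\pmu) - F(\p^*) \leq \|\D_{\pmu}\|_* \cdot W_2(\pmu,\p^*),$$
which is the claim.

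The only nontrivial step is the first one, which relies on being able to differentiate $F(\pmu_t)$ at $t=0$ along the constant-speed geodesic and identifying this derivative with $\D_{\pmu_0}(v_{\pmu_0}^{\pmu_1})$; but this is already handled in the proof of Lemma \ref{l:klisstronglyconvex} via the remark following the continuity equation \eqref{e:continuityequation} together with Lemma \ref{l:firstvariationislinear}. Everything else is just Cauchy--Schwarz and the defining property of the optimal transport map, so I do not anticipate any genuine obstacle.
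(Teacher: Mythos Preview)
Your proposal is correct and follows essentially the same argument as the paper: apply \eqref{e:1storderstrongconvexity} with $m=0$ and $\pnu=\p^*$, then bound $-\D_{\pmu}(v_{\pmu}^{\p^*})$ by $\|\D_{\pmu}\|_*\cdot\|v_{\pmu}^{\p^*}\|_{\Lp{\pmu}}$ and identify the latter norm with $W_2(\pmu,\p^*)$. The paper's proof is identical in structure and justification.
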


\begin{proof}{Proof of Lemma \ref{l:nonscupperboundbygradientdualnorm}}
Similar to the proof of Lemma \ref{l:upperboundbygradientdualnorm}, we consider \eqref{e:1storderstrongconvexity}, but with $m=0$, 
(and once again $v_{\pmu}^{\p^*}$ denotes the optimal displacement map from $\pmu$ to $\p^*$):
\begin{align*}
F(\pmu) - F(\p^*) 
&\leq -\D_{\p}(v_{\pmu}^{\p^*})\\
&\leq \|\D_{\pmu}\|_* \cdot \|v_{\pmu}^{\p^*}\|_{\Lp{\pmu}}\\
&\leq \|\D_{\pmu}\|_* \cdot W_2(\pmu,\p^*)
\end{align*}
Where first inequality is from \eqref{e:1storderstrongconvexity}, second line is by definition of $\|\D_{\pmu}\|_*$, third line is by defintion of Wasserstein distance and the fact that $v_{\pmu}^{\p^*}$ is the optimal transport map.

\end{proof}

Next, we establish that for a fixed stepsize $h$, $W_2(\p_t, \ppi_h)$ is nonincreasing, using a synchronous coupling technique taken from \cite{highdim}.

\begin{Lemma}\label{l:nonstronglyconvexWassersteindistanceiscontractive}
Let $\p_t$ be defined as in the statement of Theorem \eqref{t:nonstrongconvexityconvergence}. Let $h$ be a fixed stepsize satisfying $h\leq \min\{\frac{1}{L}, h^\prime\}$. Then for all $k$,
$$W_2(\p_{kh},\ppi_h) \leq W_2(\p_0,\ppi_h)$$
\end{Lemma}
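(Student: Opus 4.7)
{Proof Plan for Lemma \ref{l:nonstronglyconvexWassersteindistanceiscontractive}}

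The plan is a synchronous coupling argument, following the strategy of \cite{highdim}. I will run two copies of \eqref{e:discretelangevin} driven by the \emph{same} Brownian motion: one copy $x_t$ started from $\p_0$, and a second copy $\bar{x}_t$ started from $\ppi_h$. Because $\ppi_h$ is stationary for the stepsize-$h$ chain, $\bar{x}_{kh} \sim \ppi_h$ for every $k$, so at each discrete time the pair $(x_{kh}, \bar{x}_{kh})$ is a (possibly suboptimal) coupling of $\p_{kh}$ and $\ppi_h$. If I can show that $\|x_{kh}-\bar{x}_{kh}\|_2^2$ is nonincreasing pathwise, taking expectations and invoking the definition of $W_2$ finishes the proof.

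First I would initialize $(x_0, \bar{x}_0)$ to be an optimal $W_2$-coupling of $\p_0$ and $\ppi_h$, giving $\mathbb{E}\|x_0 - \bar{x}_0\|_2^2 = W_2^2(\p_0, \ppi_h)$. Over one step, since the $\sqrt{2}(B_{(k+1)h}-B_{kh})$ terms cancel under synchronous coupling,
$$x_{(k+1)h} - \bar{x}_{(k+1)h} = (x_{kh} - \bar{x}_{kh}) - h\lrp{\nabla U(x_{kh}) - \nabla U(\bar{x}_{kh})}.$$
The key estimate uses co-coercivity of $\nabla U$ (convexity plus $L$-smoothness): $\lin{\nabla U(x) - \nabla U(y), x-y} \geq \tfrac{1}{L}\|\nabla U(x) - \nabla U(y)\|_2^2$. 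Expanding the squared norm yields
$$\|x_{(k+1)h} - \bar{x}_{(k+1)h}\|_2^2 \leq \|x_{kh} - \bar{x}_{kh}\|_2^2 + h\lrp{h - \tfrac{2}{L}}\|\nabla U(x_{kh}) - \nabla U(\bar{x}_{kh})\|_2^2.$$
Since $h \leq 1/L \leq 2/L$, the second term is nonpositive, so the squared distance is nonincreasing pathwise. Taking expectations and iterating, $\mathbb{E}\|x_{kh} - \bar{x}_{kh}\|_2^2 \leq W_2^2(\p_0, \ppi_h)$, and because $(x_{kh},\bar{x}_{kh})$ couples $\p_{kh}$ with $\ppi_h$, the definition of $W_2$ gives $W_2^2(\p_{kh}, \ppi_h) \leq \mathbb{E}\|x_{kh} - \bar{x}_{kh}\|_2^2 \leq W_2^2(\p_0, \ppi_h)$.

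The argument is essentially a one-line consequence of co-coercivity once the synchronous coupling is set up, so there is no real analytic obstacle; the only technical issue I would want to double-check is that $\ppi_h$ is well-defined as a stationary distribution with finite second moment, which is implicitly guaranteed by the assumption $h \leq h'$ via \eqref{e:hprimeassumption}. Note that mere convexity of $U$ (rather than strong convexity) suffices here precisely because we only need nonexpansiveness of the coupled map, not strict contraction — the latter would require an extra $-2mh\|x_{kh}-\bar{x}_{kh}\|_2^2$ term coming from strong monotonicity of $\nabla U$, which is unavailable in the non-strongly-convex regime of Section \ref{s:nonsc}.
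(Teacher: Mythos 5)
Your proposal is correct and uses essentially the same synchronous-coupling-plus-cocoercivity argument as the paper, with one small stylistic difference: the paper re-optimizes the transport map from $\p_{kh}$ to $\ppi_h$ at each step of the induction (showing the slightly stronger fact that $W_2(\p_{kh},\ppi_h)$ is monotone in $k$), whereas you fix one optimal coupling at $k=0$ and run the two chains synchronously thereafter, obtaining a pathwise nonexpansiveness that you only turn into a $W_2$ bound at the end. Both are valid and rely on the identical inequality $h(h-\tfrac{2}{L})\|\nabla U(x)-\nabla U(y)\|_2^2\leq 0$ under $h\leq 1/L$; your remark that only convexity (not strong convexity) is needed for nonexpansiveness is also accurate and matches the paper's use of $L$-smoothness/co-coercivity alone.
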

\begin{proof}{Proof of Lemma \ref{l:nonstronglyconvexWassersteindistanceiscontractive}}

First, we demonstrate that \eqref{e:discretelangevin} is contractive in $W_2$.

We will prove this by induction.

\textbf{Base case}: trivially true.

\textbf{Inductive Hypothesis}: $W_2(\p_{kh},\ppi_h) \leq W_2(\p_0,\ppi_h)$ for some $k$.

\textbf{Inductive Step}:
Let $T$ be the optimal transport map from $\p_{kh}$ to $\ppi_h$. We will demonstrate a coupling between $\p_{(k+1)h}$ and $\ppi_h$ with cost less than $W_2(\p_{kh}, \ppi_h)$. The Lemma then follows from induction.

Since $x_{kh}\sim \p_{kh}$ (see \eqref{e:discretelangevin}), the optimal coupling between $\p_{kh}$ and $\ppi_h$ is given by the pair of random variables $(x_{kh}, T(x_{kh}))$. For $t\in[kh,(k+1)h]$, 
$$x_{(k+1)h} = x_{kh} - h \nabla U(x_{kh}) + \sqrt{2} (B_{(k+1)h}-B_{kh}))$$. Consider the coupling $\gamma$ between $\p_{kh}$ and $\ppi_h$ defined by the following pair of random variables
\begin{align*}
&\left(x_{kh} - h\nabla U(x_{kh}) + \sqrt{2}(B_{(k+1)h}-B_{kh}),\right. \\
&\quad \left.\quad T(x_{kh}) - h\nabla U(T(x_{kh})) + \sqrt{2}(B_{(k+1)h}-B_{kh})\right)
\end{align*}

(Note that $\ppi_h$ is stationary under the discrete Langevin diffusion with stepsize $h$, so $\gamma$ does have the right marginals).

To demonstrate contraction in $W_2$:
\begin{align*}
&W_2^2(\p_{(k+1)h},\ppi_h)\\
\leq&\mathbb{E} \left[\left\| \left(x_{kh} - h\nabla U(x_{kh}) + \sqrt{2}(B_{(k+1)h}-B_{kh})\right) \right.\right.\\
&-\left. \left. \left(T(x_{kh}) - h\nabla U(T(x_{kh})) + \sqrt{2}(B_{(k+1)h}-B_{kh})\right)\right\|_2^2\right]\\
=& \E{\lrn{\left(x_{kh} - h\nabla U(x_{kh}) \right)- \left(T(x_{kh}) - h\nabla U(T(x_{kh}))\right)}_2^2}\\
\leq& \mathbb{E}[\|x_{kh}-T(x_{kh})\|_2^2 - 2h\lin{\nabla U(x_{kh}) - \nabla U(T(x_{kh})), x_{kh}- T(x_{kh})} \\
& + h^2\|\nabla U(x_{kh}) - \nabla U(T(x_{kh}))\|_2^2]\\
\leq& \E{\|x_{kh}-T(x_{kh})\|_2^2 }\\
=& W_2^2 (\p_{kh},\ppi_h)
\end{align*}
where the last equality follows by optimality of $T$, and the last inequality follows because $L$-smoothness of $U(x)$ implies
\begin{align*}
&-2h\lin{\nabla U(x_{kh}) - \nabla U(T(x_{kh})), x_{kh}-T(x_{kh})}\\
\leq & -\frac{h}{L}\|\nabla U(x_{kh}) - \nabla U(T(x_{kh}))\|_2^2\\
\leq & - h^2 \|\nabla U(x_{kh}) - \nabla U(T(x_{kh}))\|_2^2
\end{align*}
This completes the inductive step. 
\end{proof}

\begin{corollary}\label{c:nonstronglyconvexWassersteindistancebound}
Let $\p_t$ be as defined in \eqref{t:nonstrongconvexityconvergence}. Then for all $t$,
$$W_2(\p_t, \p^*) \leq 4C_1$$
\end{corollary}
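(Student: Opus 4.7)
The plan is to combine the contraction result from Lemma \ref{l:nonstronglyconvexWassersteindistanceiscontractive} with the triangle inequality to handle the iterates $\p_{kh}$, and then bound the extra displacement on sub-intervals $(kh,(k+1)h)$ via a direct synchronous coupling along the SDE.

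\textbf{Step 1 (iterates at multiples of $h$).} First I would handle $t=kh$. The recipe is a double triangle inequality. Noting that Theorem \ref{t:nonstrongconvexityconvergence} sets $h\le h^\prime$, assumption \eqref{e:hprimeassumption} gives $W_2(\ppi_h,\p^*)\le C_1$, so
\begin{align*}
W_2(\p_0,\ppi_h) &\le W_2(\p_0,\p^*) + W_2(\p^*,\ppi_h) \le C_1 + C_1 = 2C_1.
\end{align*}
By Lemma \ref{l:nonstronglyconvexWassersteindistanceiscontractive}, $W_2(\p_{kh},\ppi_h)\le W_2(\p_0,\ppi_h)\le 2C_1$, and another triangle inequality yields $W_2(\p_{kh},\p^*)\le 2C_1 + C_1 = 3C_1$.

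\textbf{Step 2 (interpolation within a step).} For $t\in[kh,(k+1)h]$ I would exploit the explicit SDE form $x_t = x_{kh} - (t-kh)\nabla U(x_{kh}) + \sqrt{2}(B_t - B_{kh})$, which already provides a coupling between $\p_t$ and $\p_{kh}$. Since $B_t - B_{kh}$ is independent of $x_{kh}$, the cross term vanishes and
\begin{align*}
W_2^2(\p_t,\p_{kh}) &\le (t-kh)^2\,\mathbb{E}\|\nabla U(x_{kh})\|_2^2 + 2(t-kh)d \\
&\le h^2 L^2\, \mathbb{E}_{\p_{kh}}\|x\|_2^2 + 2hd,
\end{align*}
using $\|\nabla U(x)\|\le L\|x\|$ (from $L$-smoothness and $\nabla U(0)=0$). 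By Step 1 and the triangle inequality applied to $\delta_0$,
\begin{align*}
\sqrt{\mathbb{E}_{\p_{kh}}\|x\|_2^2} \le W_2(\p_{kh},\p^*) + \sqrt{\mathbb{E}_{\p^*}\|x\|_2^2} \le 3C_1 + C_2.
\end{align*}
Plugging in the stepsize $h=\frac{1}{48}\min\{\epsilon/(C_1 C_2 L^2),\,\epsilon^2/(C_1^2 d L^2),\,h^\prime\}$ from Theorem \ref{t:nonstrongconvexityconvergence}, both $h L(3C_1+C_2)$ and $\sqrt{hd}$ will be dominated by $C_1$ (for the $\epsilon$ regime in which the theorem operates), giving $W_2(\p_t,\p_{kh})\le C_1$.

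\textbf{Step 3 (combine).} A final triangle inequality then yields
\begin{align*}
W_2(\p_t,\p^*) \le W_2(\p_t,\p_{kh}) + W_2(\p_{kh},\p^*) \le C_1 + 3C_1 = 4C_1.
\end{align*}
The only mildly delicate point is Step 2: verifying that the prescribed stepsize really does make $h^2 L^2(3C_1+C_2)^2 + 2hd \le C_1^2$. This is a bookkeeping check on the two competing $h$ bounds (one controlling the drift term, one controlling the Brownian term), and it is what motivates the two minimands $\epsilon/(C_1 C_2 L^2)$ and $\epsilon^2/(C_1^2 d L^2)$ in the definition of $h$. The rest of the argument is a routine chaining of triangle inequalities with Lemma \ref{l:nonstronglyconvexWassersteindistanceiscontractive}.
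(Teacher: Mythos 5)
Your Step 1 is fine and matches the paper's handling of the lattice points $t=kh$. The gap is in Step 2: the bound you need, $W_2^2(\p_t,\p_{kh})\le h^2L^2(3C_1+C_2)^2+2hd\le C_1^2$, does \emph{not} follow from the stepsize in Theorem \ref{t:nonstrongconvexityconvergence} without an extra assumption on $\epsilon$. For the Brownian term alone, $2hd\le C_1^2$ combined with $h\le\frac{\epsilon^2}{48C_1^2dL^2}$ forces $\epsilon\lesssim C_1^2 L$, and similarly the drift term needs $\epsilon\lesssim C_1^2 C_2 L/(C_1+C_2)$. Neither restriction appears in the theorem's hypotheses, and the corollary is stated unconditionally for all $t$, so you cannot silently appeal to ``the $\epsilon$ regime in which the theorem operates'' — that phrase is precisely where the missing hypothesis is hiding. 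A referee would (correctly) push back here.

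The paper avoids this by not estimating $W_2(\p_t,\p_{\tau(t)})$ directly. Instead it reuses the synchronous-coupling contraction of Lemma \ref{l:nonstronglyconvexWassersteindistanceiscontractive} for the \emph{fractional} step from $\tau(t)$ to $t$, i.e.\ with effective stepsize $t-\tau(t)\le h$, obtaining $W_2(\p_t,\ppi_{t-\tau(t)})\le W_2(\p_{\tau(t)},\ppi_{t-\tau(t)})$, where $\ppi_{t-\tau(t)}$ is the stationary distribution of the discrete chain with that smaller stepsize. Since $t-\tau(t)\le h\le h'$, assumption \eqref{e:hprimeassumption} bounds $W_2(\ppi_{t-\tau(t)},\p^*)\le C_1$, and then a chain of triangle inequalities through $\ppi_{t-\tau(t)}$, $\ppi_h$, and $\p^*$ yields the stated bound with no constraint beyond $h\le\min\{1/L,h'\}$. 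The conceptual difference is that contraction toward the (fractional-step) stationary law is a free lunch for \emph{any} admissible stepsize, whereas your displacement estimate $W_2(\p_t,\p_{kh})$ only becomes $O(C_1)$ once $h$ (hence $\epsilon$) is small enough. If you want to keep your structure, you would need to state and carry the additional restriction $\epsilon\lesssim\min\{C_1^2L,\ C_1C_2L\}$; otherwise switch to the paper's fractional-step contraction argument.
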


\begin{proof}{Proof of Corollary \ref{c:nonstronglyconvexWassersteindistancebound}}
First, if $t=\tau(t)$, then by Lemma \ref{l:nonstronglyconvexWassersteindistanceiscontractive} and \eqref{e:hprimeassumption} and triangle inequality, we get our conclusion.

So assume that $t\neq \tau(t)$. Using identical arguments as in Lemma \ref{l:nonstronglyconvexWassersteindistanceiscontractive}, and noting the assumption on $h'$ in \eqref{e:hprimeassumption} and the fact that $h\leq h'$, we can show that

\begin{equation}\label{e:w2ofpttotminustaut}
W_2(\p_{t}, \ppi_{t-\tau(t)})\leq W_2(\p_{\tau(t)}, \ppi_{t-\tau(t)})
\end{equation}

By triangle inequality and the assumption in \eqref{e:hprimeassumption}, we have
\begin{align*}
& W_2(\p_t,\p^*)\\
\leq & W_2(\p_t,\ppi_{t-\tau(t)}) + W_2(\ppi_{t-\tau(t)},\p^*)\\
\leq & W_2(\p_{\tau(t)}, \ppi_{t-\tau(t)}) + W_2(\ppi_{t-\tau(t)},\p^*)\\
\leq & W_2(\p_{\tau(t)}, \ppi_h) + W_2(\ppi_h,\p^*) \\
\quad &+ W_2(\ppi_h,\p^*) + W_2(\ppi_{t-\tau(t)},\p^*)\\
\leq & 4C_1
\end{align*}
Where the first inequality is by triangle inequality, the second inequality is by \eqref{e:w2ofpttotminustaut}, third inequality is by triangle inequality, fourth inequality is by assumption \eqref{e:hprimeassumption} and the fact that $t-\tau(t) \leq h \leq h'$.

\end{proof}

Next, we use Lemma \ref{l:nonstronglyconvexWassersteindistanceiscontractive}, to bound $\E {\|x_{kh}\|_2^2}$ for all $k$:
\begin{Lemma}\label{l:vkvariancebound}
Let $h$, $x_t$ and $\p_t$ be as defined in the statement of Theorem \ref{t:nonstrongconvexityconvergence}. Then for all $k$
\begin{align*}
\E{ \|x_{kh}\|_2^2}&\leq 4(C_1^2 + C_2^2)
\end{align*}
\end{Lemma}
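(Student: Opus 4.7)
The plan is to bound $\E{\|x_{kh}\|_2^2}$ by converting the second moment into a Wasserstein distance from the origin, and then controlling that distance using the contraction property already established in Lemma \ref{l:nonstronglyconvexWassersteindistanceiscontractive}.

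First, I would use the elementary identity $\sqrt{\E{\|x_{kh}\|_2^2}} = W_2(\p_{kh}, \delta_{\mathbf{0}})$, where $\delta_{\mathbf{0}}$ denotes the Dirac mass at the origin: the only coupling between a measure and a Dirac is the product, whose transport cost is exactly the second moment. Triangle inequality for $W_2$ then gives
$$\sqrt{\E{\|x_{kh}\|_2^2}} \leq W_2(\p_{kh}, \p^*) + W_2(\p^*, \delta_{\mathbf{0}}) = W_2(\p_{kh}, \p^*) + C_2,$$
using \eqref{e:c2definition}. Equivalently, one can write $x_{kh} = (x_{kh} - T(x_{kh})) + T(x_{kh})$ for $T$ the optimal transport from $\p_{kh}$ to $\p^*$ and apply triangle inequality in $L^2(\p_{kh})$, with the two pieces contributing $W_2(\p_{kh}, \p^*)$ and $C_2$ respectively.

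Second, I would bound $W_2(\p_{kh}, \p^*)$ using Lemma \ref{l:nonstronglyconvexWassersteindistanceiscontractive}, which yields $W_2(\p_{kh}, \ppi_h) \leq W_2(\p_0, \ppi_h)$ at integer multiples of $h$. Two applications of the triangle inequality, combined with the initial condition \eqref{e:c1definition} that $W_2(\p_0, \p^*) = C_1$ and the assumption \eqref{e:hprimeassumption} that $W_2(\ppi_h, \p^*) \leq C_1$, give $W_2(\p_0, \ppi_h) \leq 2C_1$ and hence $W_2(\p_{kh}, \p^*) \leq 3C_1$. (Alternatively, Corollary \ref{c:nonstronglyconvexWassersteindistancebound} directly gives a bound of $4C_1$ for all $t$.) Squaring the combined estimate and applying $(a+b)^2 \leq 2(a^2 + b^2)$ then yields a bound of the form $\E{\|x_{kh}\|_2^2} \leq O(C_1^2 + C_2^2)$, matching the shape of the claimed inequality.

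There is no significant obstacle here: the result is essentially a corollary of Lemma \ref{l:nonstronglyconvexWassersteindistanceiscontractive} combined with routine Wasserstein triangle-inequality manipulations. The only care needed is in threading the triangle inequalities tightly enough to land on a constant comparable to the $4(C_1^2+C_2^2)$ stated; a direct accounting as above gives a larger constant, so the numerical factor in the statement should be treated as an order-of-magnitude bound rather than a literal one, but the qualitative conclusion --- that $\E{\|x_{kh}\|_2^2}$ is bounded uniformly in $k$ by a constant depending only on $C_1$ and $C_2$ --- is exactly what downstream arguments require (e.g.\ to bound the discretization error term in Lemma \ref{l:metricderivativefordiscretizationissmall}).
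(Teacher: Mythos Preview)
Your proposal is correct and follows essentially the same route as the paper: convert the second moment into Wasserstein distances, invoke Lemma \ref{l:nonstronglyconvexWassersteindistanceiscontractive} for the contraction $W_2(\p_{kh},\ppi_h)\le W_2(\p_0,\ppi_h)$, and close with the definitions of $C_1$, $C_2$ and the assumption \eqref{e:hprimeassumption}. The only cosmetic difference is that the paper iterates the splitting $\|x\|_2^2=\|x-y+y\|_2^2\le 2\|x-y\|_2^2+2\|y\|_2^2$ through the intermediate point $\ppi_h$ and then $\p^*$, whereas you pass through $\delta_{\mathbf 0}$ and bound $W_2(\p_{kh},\p^*)$ directly; your remark that the displayed constant $4$ should be read as order-of-magnitude is apt, since the paper's own chain of inequalities also lands on a slightly larger numerical factor.
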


\begin{proof}{Proof of Lemma \ref{l:vkvariancebound}}

Let $\gamma(x,y)$ be the optimal coupling between $\p_{kh}$ and $\ppi_h$. Let $\gamma'(x,y)$ be the optimal coupling between $\ppi_h$ and $\p^*$. Then

\begin{align*}
\Ep{\p_{kh}}{\|x\|_2^2}
&= \Ep{\gamma}{\|x\|_2^2}\\
&= \Ep{\gamma}{\|x-y + y\|_2^2}\\
&\leq 2\Ep{\gamma}{\|x-y\|_2^2}+2\Ep{\gamma}{\|y\|_2^2}\\
&= 2W_2(\p_{kh},\ppi_h)+2\Ep{\ppi_h}{\|y\|_2^2}\\
&= 2W_2(\p_{kh},\ppi_h)+2\Ep{\gamma'}{\|x\|_2^2}\\
&= 2W_2(\p_{kh},\ppi_h)+2\Ep{\gamma'}{\|x-y + y\|_2^2}\\
&\leq 2W_2(\p_{kh},\ppi_h)+4\Ep{\gamma'}{\|x-y\|_2^2}+4\Ep{\gamma'}{\|y\|_2^2}\\
&\leq 2W_2(\p_{kh},\ppi_h)+4W_2(\ppi_h,\p^*)+4\Ep{\p^*}{\|x\|_2^2}
\end{align*}

By definition of $C_2$ at the start of Section \ref{s:nonsc}, we have 
$$\Ep{\p^*}{\|x\|_2^2}\leq C_2^2$$.

By Lemma \ref{l:nonstronglyconvexWassersteindistanceiscontractive}, we have
$$W_2(\p_{kh}, \ppi_h)\leq W_2(\p_0, \ppi_h)\leq C_1$$

By definition of $h^\prime$ at the start of Section \ref{s:nonsc}, and $h$ in Theorem \ref{t:nonstrongconvexityconvergence} (which ensures $h\leq h^\prime$), we have
$$W_2(\ppi_h,\p^*)\leq C_1$$

\end{proof}

\begin{proof}{Proof of Theorem \ref{t:nonstrongconvexityconvergence}}
First, we bound the discretization error (for an arbitrary $t$). By Lemma \ref{l:metricderivativefordiscretizationissmall}:
\begin{align*}
\at{\dds \left(F(\p_s) - F(\q_s^t)\right)}{s=t} \leq &\left(2L^2t \sqrt{\mathbb{E}_{\p_{\tau(t)}}{\|x_{\tau(t)}\|_2^2}} + 2L \sqrt{td}\right) \cdot \|\D_{\p_t}\|_*\\
\leq &\left(2L^2t \sqrt{\mathbb{E}_{\p_{\tau(t)}}{\|x_{\tau(t)}\|_2^2}} + 2L \sqrt{td}\right) \cdot \|\D_{\p_t}\|_*
\end{align*}

Given the choice of 
$$h= \frac{1}{48}\min\left\{\frac{\epsilon}{C_1(C_1+C_2)L^2}, \frac{\epsilon^2}{L^2 C_1^2d}, h^\prime\right\}$$,
we can ensure that
\begin{align*}
\left(L^2h \sqrt{E\|x_{\tau(t)}\|_2^2} + 2L \sqrt{hd}\right)\leq &\frac{1}{4}\left(L^2h \sqrt{18(C_1^2 + C_2^2)} + 2L \sqrt{hd}\right)\\
\leq &\frac{\epsilon}{8C_1}
\end{align*}

where the first inequality comes from Lemma \ref{l:vkvariancebound}.

Assume that $F(\p_s) - F(\p^*) \geq \epsilon$. By Lemma \ref{l:nonscupperboundbygradientdualnorm} and Corollary \ref{c:nonstronglyconvexWassersteindistancebound}, we have
\begin{align*}
\|\D_{\p_s}\|_* 
\geq & \frac{F(\p_s) - F(\p^*)}{W_2(\p_s,\p^*)}\\
\geq & \frac{\epsilon}{W_2(\p_s,\p^*)}\\
\geq & \frac{\epsilon}{4C_1} \numberthis \label{e:nonconvexgradientlowerbound}
\end{align*}
This implies that 
$$\at{\dds F(\p_s) - F(\q_s^t)}{s=t} \leq \frac{1}{2}\|\D_{\p_t}\|_*^2$$

The rate of decrease of $F(\p_t)$ thus satisfies
\begin{align*}
\ddt F(\p_t) - F(\p^*)=& \at{\ddt F(\q_s^t)-F(\p^*)}{s=t} + \at{\ddt(F(\p_s) - F(\q_s^t))}{s=t}\\
=& - \|\D_{\p_t}\|_*^2 + \frac{1}{2} \|\D_{\p_t}\|_*^2\\
\leq& - \frac{1}{2} \|\D_{\p_t}\|_*^2\\
\leq& - \frac{1}{2C_1^2} (F(\p_t) - F(\p^*))^2
\end{align*}
We now study two regimes. The first regime is when $F(\p_t) - F(\p^*)\geq 1$, $\ddt F(\p_t) - F(\p^*) \leq -\frac{1}{2C_1^2} (F(\p_t) - F(\p^*))$, which implies 
$$F(\p_t) - F(\p^*) \leq (F(\p_0) - F(\p^*))\exp(-\frac{t}{2C_1^2})$$
We thus achieve $F(\p_t) - F(\p^*) \leq 1$ in 
$$t\geq 2C_1^2 \log (F(\p_0) - F(\p^*))$$
In the second regime, $F(\p_t) - F(\p^*)\leq 1$. By noting that $f_t = \frac{1}{t}$ is the solution to $\ddt f_t = - f_t^2$, and letting $f_t = \frac{1}{2C_1^2}(F(\p_t) - F(\p^*))$, we get $F(\p_t) - F(\p^*) \leq \frac{2C_1^2}{t}$. To achieve $F(\p_t) - F(\p^*)\leq \epsilon$, we set $t= \frac{2C_1^2}{\epsilon}$. Overall, we just need to set
$$t\geq \frac{2C_1^2}{\epsilon} + 2C_1^2 \log (F(\p_0) - F(\p^*))$$
This, combined with the choice of $h$ earlier, proves the theorem.

\end{proof}

\subsection{Some regularity results}
In this subsection, we provide some regularity results needed in various parts of the paper.
\begin{Lemma}
\label{l:regularityofdensitiesandcurves}
Let $w_{\pmu}$ be as defined in Lemma \ref{l:firstvariationislinear}. Let $\p_t$ be as defined in \ref{e:discretelangevin}. For all $t$, $w_{\p_t}$ is well defined, and $\Ep{\p_t}{\|w_{\p_t}\|_2^2}$ is finite.
\end{Lemma}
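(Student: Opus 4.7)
The plan is to decompose $w_{\p_t}(x) = \nabla \log \p_t(x) + \nabla U(x)$ (using $\nabla \log \p^* = -\nabla U$) and to bound the two summands separately in $\Lp{\p_t}$ via the triangle inequality.

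For the $\nabla U$ term: since $\nabla U$ is $L$-Lipschitz and $\nabla U(0)=0$ (recall the assumption $0=\arg\min U$), we have $\|\nabla U(x)\|_2\leq L\|x\|_2$, so $\Ep{\p_t}{\|\nabla U\|_2^2}\leq L^2 \Ep{\p_t}{\|x\|_2^2}$. A straightforward induction, unrolling \eqref{e:discretelangevin} one step at a time and using $L$-smoothness together with $\E{\|B_{(k+1)h}-B_{kh}\|_2^2}=hd$, shows that $\Ep{\p_t}{\|x\|_2^2}<\infty$ at every fixed $t$, starting from the explicit $\Ep{\p_0}{\|x\|_2^2}=d/m$. (In the settings where uniform-in-$t$ bounds are needed, Lemmas \ref{l:strongconvexityboundexsq} and \ref{l:vkvariancebound} may be invoked instead.)

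For the $\nabla \log \p_t$ term: the key structural observation is that $\p_t$ is always a Gaussian convolution for $t>0$. When $t>\tau(t)$, \eqref{e:discretelangevin} gives $x_t = \zeta + \sigma_t Z$ with $\zeta \triangleq x_{\tau(t)} - (t-\tau(t))\nabla U(x_{\tau(t)})$, $\sigma_t^2 = 2(t-\tau(t))$, and $Z\sim N(0,I)$ a fresh standard Gaussian independent of $\zeta$. When $t=\tau(t)=kh$ with $k\geq 1$, an analogous decomposition looking one step back to $(k-1)h$ gives $x_t = \zeta + \sigma_t Z$ with $\sigma_t^2 = 2h$. In either case, letting $\mu$ denote the law of $\zeta$, we have $\p_t = \mu * N(0,\sigma_t^2 I)$; this density is $C^\infty$ and strictly positive, so $\nabla\log\p_t$ is well defined everywhere. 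Differentiating under the integral yields the standard score formula for a Gaussian convolution,
\[
\nabla\log\p_t(y) = \frac{1}{\sigma_t^2}\lrp{\E{\zeta\mid x_t=y}-y},
\]
from which Jensen's inequality for the conditional expectation gives
\[
\Ep{\p_t}{\|\nabla\log\p_t\|_2^2} = \frac{1}{\sigma_t^4}\E{\|\E{\zeta-x_t\mid x_t}\|_2^2}\leq \frac{1}{\sigma_t^4}\E{\|x_t-\zeta\|_2^2}=\frac{d}{\sigma_t^2},
\]
which is finite for any fixed $t>0$. At $t=0$, $\p_0=N(0,I/m)$ is itself Gaussian with score $\nabla\log\p_0(x) = -mx$ and Fisher information $md<\infty$.

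Combining via the triangle inequality in $\Lp{\p_t}$, both summands are finite at every fixed $t\geq 0$, so $w_{\p_t}$ is well defined and $\Ep{\p_t}{\|w_{\p_t}\|_2^2}<\infty$. The only genuinely non-routine ingredient is the Gaussian-smoothing representation of $\p_t$ together with the corresponding score formula and its Fisher-information bound; the rest is second-moment bookkeeping. The main subtlety to watch for is ensuring a nondegenerate Gaussian component is available at every $t>0$, which is what forces the one-step-lookback trick at grid points $t=kh$.
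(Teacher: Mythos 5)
Your argument is correct and follows essentially the same path as the paper: decompose $w_{\p_t} = \nabla\log\p_t + \nabla U$, bound $\|\nabla U(x)\|_2\le L\|x\|_2$ with a finite second moment of $\p_t$, and observe that $\p_t$ is a Gaussian convolution $\mu * N(0,\sigma_t^2 I)$ (with the same one-step lookback at grid points $t=\tau(t)$) so that the Gaussian score identity controls $\nabla\log\p_t$. The one place you diverge is the endgame: after the identity $\nabla\log\p_t(y)=\frac{1}{\sigma_t^2}\bigl(\E{\zeta\mid x_t=y}-y\bigr)$, the paper expands and bounds $\Ep{\p_t}{\|\nabla\log\p_t\|_2^2}$ by roughly $\frac{2}{\sigma_t^2}\Ep{\mu}{\|y\|_2^2}+\frac{2}{\sigma_t^2}\Ep{\p_t}{\|x\|_2^2}$, reintroducing second-moment dependence, whereas your Jensen step on the conditional expectation gives the sharp Fisher-information bound $\Ep{\p_t}{\|\nabla\log\p_t\|_2^2}\le d/\sigma_t^2$ directly, independent of the moments of the mixing law; you also explicitly handle $t=0$ (where the lookback is unavailable but $\p_0$ is already Gaussian), a corner the paper glosses over. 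Both routes reach the same finiteness conclusion; yours is cleaner.
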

\begin{proof}{Proof of Lemma \ref{l:regularityofdensitiesandcurves}}
First, we establish the following statement:
For any $t$, there exists a $\delta \in \Re$ with $\pmu_{\delta,y}(x)$ being the distribution of $N(y,\delta)$ and $\p\in \Pspace(\Re^d)$ such that 
\begin{enumerate}
\item For all $x\in \Re^d$, $\p_t(x)=\Ep{y\sim\p}{\pmu_{\delta,y}(x)}$
\item $\Ep{\p}{\|x\|_2^2}$ is finite.
\end{enumerate}
If $t=\tau(t)$, then let $\p = (Id(\cdot) - h \nabla U(\cdot))_\#\p_{\tau(t) -1}$ and let $\delta = 2h$. Otherwise, if $t \neq \tau(t)$, then let $\p = (Id(\cdot) - (t-\tau(t)) \nabla U(\cdot))_\#\p_{\tau(t)}$ and $\delta = 2(t-\tau(t))$. Where we used the definition of push-forward distribution from \eqref{s:definitions}. 1. now can be easily verified.

To see 2, let $t' = \tau(t) - 1$ in case 1 and let $t'=\tau(t)$ in case 2.
\begin{align*}
& \Ep{\p}{\|x\|_2^2}\\
=& \Ep{\p_{t'}}{\|x - h \nabla U(x)\|_2^2}\\
\leq& 2\Ep{\p_{t'}}{\|x\|_2^2}+2h^2\Ep{\p_{t'}}{\|\nabla U(x)\|_2^2}\\
\leq& 2\Ep{\p_{t'}}{\|x\|_2^2}+2h^2L^2\Ep{\p_{t'}}{\|x\|_2^2}\\
\leq& (2+2h^2L^2)\frac{4d}{m}
\end{align*}
Where the last inequality follows by Lemma \ref{l:strongconvexityboundexsq}.

Since $\pmu_{\delta,y}(x)$ for all $x,y$, $\Ep{\p}{\pmu_{\delta,y}(x)}$ is differentiable for all $x$. This proves the first part of the Lemma.

Next, a nice property of Gaussians is that 
$$\nabla_x \pmu_{\delta,y}(x) = -\frac{\pmu_{\delta,y}}{\delta}(x-y)$$
Thus, 
\begin{align*}
&\nabla \log \p_t(x)\\
=& \frac{1}{\delta} \nabla \log \Ep{y\sim \p}{\pmu_{\delta,y}(x)}\\
=& \frac{1}{\delta} \frac{1}{\Ep{y\sim \p}{\pmu_{\delta,y}(x)}} \Ep{y\sim \p}{\lrp{(y-x)\pmu_{\delta,y}(x))}}\\
=& \frac{1}{\delta }\Ep{y\sim \pmu^{x}_\delta}{y} -x
\end{align*}
Where $\pmu^{x}_\delta$ denotes the conditional distribution of $y$ given $x$, when $y\sim \p$ and $x\sim \pmu_{\delta,y}$.

Thus
\begin{align*}
& \Ep{x\sim \p_t(x)}{\|\nabla \log \p_t(x)\|_2^2} \\
\leq & \frac{1}{\delta}\Ep{x\sim \p_t(x)}{2\Ep{y\sim \pmu^{x}_\delta}{\|y\|_2^2} + 2\|x\|_2^2}\\
= & \frac{2}{\delta}\Ep{y\sim \p}{\|y\|_2^2} + \frac{2}{\delta}\Ep{x\sim \p_t} {\|x\|_2^2}\\
\leq & \infty
\end{align*}
Where the first inequality is by Jensen's inequality and Young's inequality and the preceding result, the second inequality is by definition of conditional distribution, the third inequality is by the fact that $\delta>0$ (by definition at the start of the proof), the fact that $\Ep{x\sim \p_t} {\|x\|_2^2}\leq \frac{4d}{m}$ (by Lemma \ref{l:strongconvexityboundexsq}), and by the fact that $\Ep{y\sim \p}{\|y\|_2^2}< \infty$ (see item 2. at the start of the proof)

Finally, we have that 
\begin{align*}
&\|w_{\p_t}\|_\Lp{\p_t}^2\\
=&\Ep{\p_t}{\|w_{\p_t}(x)\|_2^2}\\
=&\Ep{\p_t}{\|\nabla \log \p_t(x) - \nabla \log \p^*(x)\|_2^2}\\
\leq & 2\Ep{\p_t}{\|\nabla \log \p_t(x)\|_2^2}+2\Ep{\p_t}{\|\nabla \log \p^*(x)\|_2^2}\\
< & \infty
\end{align*}
Where the last inequality uses the fact that $\|\nabla \log \p^(x)\|_2=\|\nabla U(x)\|_2\leq L\|x\|_2$ and $\Ep{\p_t}{\|x\|_2^2}\leq \frac{4d}{m}$.
\end{proof}

\begin{Lemma}
\label{l:pthasboundedmetricderivative}
Let $\p_t$ be as defined in \eqref{e:discretelangevin}. Then $|\p_t'|$ is finite for all $t$, where $|\p_t'|$ is the metric derivative of $\p_t$, as defined in \eqref{d:metricderivative}.
\end{Lemma}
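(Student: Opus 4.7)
The plan is to exhibit a velocity field $v_t$ satisfying the continuity equation \eqref{e:continuityequation} for $\p_t$ and show that $\|v_t\|_{\Lp{\p_t}} < \infty$. By Lemma \ref{l:metricderivativeandvnorm}, the tangent velocity field has $\Lp{\p_t}$ norm equal to $|\p_t'|$, and since the tangent velocity is the minimizer of the norm among all fields satisfying the continuity equation, the existence of any such $v_t$ with finite norm immediately bounds $|\p_t'|$.

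To write down $v_t$, I would apply the Fokker-Planck argument to \eqref{e:discretelangevin}. The subtlety is that the drift $-\nabla U(x_{\tau(t)})$ is not a function of $x_t$ alone, so the standard Fokker-Planck derivation must be applied to the effective drift $b_t(x) \triangleq -\Ep{}{\nabla U(x_{\tau(t)}) \mid x_t = x}$. Using the usual weak-form argument (testing against a smooth compactly supported $\phi$, applying Ito, and integrating by parts) one obtains
\begin{equation*}
\dd{\p_t}{t} = -\div\!\big(\p_t\, b_t\big) + \tr(\nabla^2 \p_t) = -\div\!\big(\p_t\, v_t\big),
\end{equation*}
so the field
\begin{equation*}
v_t(x) \triangleq -\Ep{}{\nabla U(x_{\tau(t)}) \,\big|\, x_t = x} - \nabla \log \p_t(x)
\end{equation*}
solves the continuity equation for $\p_t$.

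It remains to bound $\|v_t\|_{\Lp{\p_t}}$. I would split it via the triangle inequality into the conditional-drift term and the score term. For the conditional-drift term, Jensen's inequality (conditional expectation is an $L^2$ contraction) gives
\begin{equation*}
\Ep{\p_t}{\|\Ep{}{\nabla U(x_{\tau(t)}) \mid x_t}\|_2^2} \leq \Ep{}{\|\nabla U(x_{\tau(t)})\|_2^2} \leq L^2\, \Ep{\p_{\tau(t)}}{\|x\|_2^2},
\end{equation*}
which is finite by Lemma \ref{l:strongconvexityboundexsq}. For the score term, Lemma \ref{l:regularityofdensitiesandcurves} shows $\|w_{\p_t}\|_{\Lp{\p_t}} < \infty$, and since $\nabla \log \p_t = w_{\p_t} - \nabla U$ with $\|\nabla U\|_{\Lp{\p_t}} \leq L \sqrt{\Ep{\p_t}{\|x\|_2^2}}$ (again finite by Lemma \ref{l:strongconvexityboundexsq}), the score term is also finite. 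Summing the two bounds yields a finite upper bound on $\|v_t\|_{\Lp{\p_t}}$, and hence on $|\p_t'|$.

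The main obstacle I expect is the Fokker-Planck derivation for the delayed SDE: one must justify that the distributional time derivative of $\p_t$ is indeed given by $-\div(\p_t b_t) + \tr(\nabla^2 \p_t)$ despite the fact that the SDE is not Markov in the usual autonomous sense. The cleanest way I see is to work with the joint process $(x_t, x_{\tau(t)})$ on $\Re^{2d}$ (similar to the $\Theta_s$ construction used in the proof of Lemma \ref{l:addingvelocity}), derive the Fokker-Planck equation for that joint density, and then marginalize; the conditional expectation appears automatically in the marginal. All remaining estimates are straightforward applications of Cauchy-Schwarz, Jensen, $L$-smoothness, and the two preceding regularity lemmas.
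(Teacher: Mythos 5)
Your proof is correct, and it takes a genuinely different route from the paper's. The paper argues directly from the definition \eqref{d:metricderivative}: it exhibits an explicit coupling between $\p_t$ and $\p_{t+\epsilon}$ by reusing the same Gaussian seed $\xi$ in $y_t \triangleq x_{\tau(t)} - (t-\tau(t))\nabla U(x_{\tau(t)}) + \sqrt{2(t-\tau(t))}\,\xi$, then bounds $\frac{1}{\epsilon}W_2(\p_t,\p_{t+\epsilon})$ by $\frac{1}{\epsilon}\|y_{t+\epsilon}-y_t\|_{L^2}$. The resulting bound $\sqrt{\Ep{\p_{\tau(t)}}{\|\nabla U\|_2^2}} + \frac{1}{\sqrt{8(t-\tau(t))}}\sqrt{\E{\|\xi\|_2^2}}$ degenerates as $t\to\tau(t)$, which forces the paper to handle the grid points $t=\tau(t)$ separately via the continuity-equation argument (the very argument you use for all $t$). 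Your approach goes entirely through the Benamou--Brenier side of Theorem 8.3.1 in \cite{Ambrosio}: exhibit any velocity field solving the continuity equation with finite $L^2(\p_t)$ norm, and conclude $|\p_t'|$ is bounded by that norm. This is cleaner in that it treats grid and non-grid times uniformly; the price is that you must justify a Fokker--Planck equation for the delayed SDE, and the effective drift $-\Ep{}{\nabla U(x_{\tau(t)})\mid x_t=x}$ requires a moment's care. You correctly identify that the joint process on $\Re^{2d}$ (exactly the $\Theta_s$ construction the paper uses in Lemma \ref{l:addingvelocity}) is the right way to make this rigorous, and the resulting conditional expectation is then controlled by Jensen exactly as you say. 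One small remark: Lemma \ref{l:metricderivativeandvnorm} as stated in the paper only gives the direction ``absolutely continuous curve $\Rightarrow$ tangent velocity has $L^2$ norm equal to $|\p_t'|$''; what you need is the converse (continuity-equation velocity with finite norm $\Rightarrow$ curve is AC with $|\p_t'|$ bounded by that norm), which is also part of Ambrosio's Theorem 8.3.1 but worth citing explicitly, since the paper itself leans on it in the $t=\tau(t)$ case. Also note that the internal computation in Lemma \ref{l:regularityofdensitiesandcurves} already bounds $\Ep{\p_t}{\|\nabla\log\p_t\|_2^2}$ directly, so you can cite that rather than re-deriving it via the $w_{\p_t}-\nabla U$ decomposition.
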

\begin{proof}{Proof of Lemma \ref{l:pthasboundedmetricderivative}}
We define the random variable $\xi$ to be distributed as $N(0,1)$.
For all $t$, let $\x_t$ be as defined in \eqref{e:discretelangevin}.
One can verify that the random variable $y_t \triangleq x_{\tau(t)} - \nabla (t-\tau(t))U(x_{\tau(t)}) + \sqrt{2(t-\tau(t))}\xi$ has the same distribution as $x_t$. Thus $y_t$ and $y_{t+\epsilon}$ define a coupling between $\p_t$ and $\p_{t+\epsilon}$. We thus have

Let $h\triangleq t-\tau(t)$
\begin{align*}
&|\p_t'|\\
=& \lim_{\epsilon \to 0} \frac{1}{\epsilon} W_2(\p_t, \p_{t+\epsilon})\\
\leq & \lim_{\epsilon \to 0} \frac{1}{\epsilon} \sqrt{\E{\|y_t - y_{t+\epsilon}\|_2^2}}\\
= & \lim_{\epsilon \to 0} \frac{1}{\epsilon} \sqrt{\Ep{x\sim\p_{\tau(t)}}{\|\epsilon \nabla U(x) + (\sqrt{2(h + \epsilon)} - \sqrt{2h}) \xi\|_2^2}}\\
= & \lim_{\epsilon \to 0} \frac{1}{\epsilon} \sqrt{\Ep{x\sim\p_{\tau(t)}}{\|\epsilon \nabla U(x)\|_2^2} + \E{\|(\sqrt{2(h + \epsilon)} - \sqrt{2h}) \xi\|_2^2}}\\
\leq & \lim_{\epsilon \to 0} \frac{1}{\epsilon} \sqrt{\Ep{x\sim\p_{\tau(t)}}{\|\epsilon \nabla U(x)\|_2^2}} \\
& \quad + \frac{1}{\epsilon}\sqrt{ \E{\|(\sqrt{2(h + \epsilon)} - \sqrt{2h}) \xi\|_2^2}}\\
= & \sqrt{\Ep{x\sim\p_{\tau(t)}}{\| \nabla U(x)\|_2^2}} + \frac{1}{\sqrt{8h}} \sqrt{\E{\|\xi\|_2^2}}\\
\end{align*}
Where the last inequality follows by Taylor expansion of $\sqrt{2h + 2\epsilon}$. We can bound the first term by a finite number using $\|\nabla U(x)\|_2^2 \leq L^2 \|x\|_2^2$, then applying Lemma \ref{l:strongconvexityboundexsq}. The second  term is finite for $h\neq 0$.

For the case $h=0$, we know that $w_{\p_t}$ satisfies the continuity equation for $\p_t$ at $t$, and so $|\p_t'|= \|w_{\p_t}\|_{\Lp{\p_t}}<\infty$, by Lemma \ref{l:metricderivativeandvnorm} and Lemma \ref{l:regularityofdensitiesandcurves}.

\end{proof}


\begin{thebibliography}{9}
\bibitem{Ambrosio}
Ambrosio, Luigi, Nicola Gigli, and Giuseppe Savaré. Gradient flows: in metric spaces and in the space of probability measures. Springer Science \& Business Media, 2008.

\bibitem{jko}
Jordan, Richard, David Kinderlehrer, and Felix Otto. "The variational formulation of the Fokker--Planck equation." SIAM journal on mathematical analysis 29.1 (1998): 1-17.


\bibitem{theoretical} 
Dalalyan, Arnak S. "Theoretical guarantees for approximate sampling from smooth and log-concave densities." Journal of the Royal Statistical Society: Series B (Statistical Methodology) (2016).

\bibitem{nonasymptotic}
Durmus, Alain, and Eric Moulines. "Non-asymptotic convergence analysis for the Unadjusted Langevin Algorithm." arXiv preprint arXiv:1507.05021 (2015).

\bibitem{highdim} 
Durmus, Alain, and Eric Moulines. "High-dimensional Bayesian inference via the Unadjusted Langevin Algorithm." (2016).

\bibitem{bubeck}
Bubeck, Sébastien, Ronen Eldan, and Joseph Lehec. "Sampling from a log-concave distribution with Projected Langevin Monte Carlo." Advances in Neural Information Processing Systems 28 (2015).

\bibitem{normalizingflow}
Danilo Jimenez Rezende and Shakir Mohamed. "Variational inference with normalizing flows" Proceedings of the 32nd International Conference on International Conference on Machine Learning - Volume 37 (2015)

\bibitem{stein}
Liu, Qiang, and Dilin Wang. "Stein variational gradient descent: A general purpose bayesian inference algorithm." Advances In Neural Information Processing Systems. 2016.

\bibitem{moreau}
Durmus, Alain, Eric Moulines, and Marcelo Pereyra. "Efficient Bayesian computation by proximal Markov chain Monte Carlo: when Langevin meets Moreau." arXiv preprint arXiv:1612.07471 (2016).

\bibitem{survey}
Santambrogio, Filippo. "{Euclidean, metric, and Wasserstein} gradient flows: an overview." Bulletin of Mathematical Sciences 7.1 (2017): 87-154.

\bibitem{Santambrogiobook}
Santambrogio, Filippo. "Optimal transport for applied mathematicians." Birkäuser, NY (2015).

\bibitem{Eberle} Eberle, Andreas. "Reflection couplings and contraction rates for diffusions." Probability theory and related fields 166.3-4 (2016): 851-886.

\bibitem{dalalyanfriendly} Dalalyan, Arnak S., and Avetik G. Karagulyan. "User-friendly guarantees for the Langevin Monte Carlo with inaccurate gradient." arXiv preprint arXiv:1710.00095 (2017).

\end{thebibliography}
\end{document}